\definecolor{bananamania}{rgb}{0.98, 0.91, 0.71}
\newtheorem{theorem}{Theorem}
\newtheorem{corollary}{Corollary}
\begin{document}

\title{Tree-Based Optimization: \\A Meta-Algorithm for  Metaheuristic Optimization}

\author{Benyamin Ghojogh$^\dagger$,
		Saeed Sharifian$^{\dagger\dagger}$, Hoda Mohammadzade$^{\dagger\dagger\dagger}$\\

\thanks{$^\dagger$Benyamin Ghojogh is with both Department of Electrical Engineering, Amirkabir University of Technology, Tehran, Iran and Department of Electrical Engineering, Sharif University of Technology, Tehran, Iran. E-mail: \href{mailto:ghojogh_benyamin@ee.sharif.edu}{ghojogh\_benyamin@ee.sharif.edu}}
\thanks{$^{\dagger\dagger}$Saeed Sharifian is with Department of Electrical Engineering, Amirkabir University of Technology, Tehran, Iran. E-mail: \href{mailto:sharifian_s@aut.ac.ir}{sharifian\_s@aut.ac.ir}}
\thanks{$^{\dagger\dagger\dagger}$Hoda Mohammadzade is with Department of Electrical Engineering, Sharif University of Technology, Tehran, Iran. E-mail: \href{mailto:hoda@sharif.edu}{hoda@sharif.edu}}
\thanks{This article was written in spring 2016 and slightly revised in fall 2018.}}


\maketitle

\begin{abstract}
Designing search algorithms for finding global optima is one of the most active research fields, recently. These algorithms consist of two main categories, i.e., classic mathematical and metaheuristic algorithms. This article proposes a meta-algorithm, Tree-Based Optimization (TBO), which uses other heuristic optimizers as its sub-algorithms in order to improve the performance of search.
The proposed algorithm is based on \emph{mathematical tree} subject and improves performance and speed of search by iteratively removing parts of the search space having low fitness, in order to minimize and purify the search space.  
The experimental results on several well-known benchmarks show the outperforming performance of TBO algorithm in finding the global solution. Experiments on high dimensional search spaces show significantly better performance when using the TBO algorithm. The proposed algorithm improves the search algorithms in both accuracy and speed aspects, especially for high dimensional searching such as in VLSI CAD tools for Integrated Circuit (IC) design.
\end{abstract}

\begin{IEEEkeywords}
Tree, optimization, metaheuristic, high dimensions, meta-algorithm, search.
\end{IEEEkeywords}

\IEEEpeerreviewmaketitle

\section{Introduction}\label{section_introduction}

\IEEEPARstart{R}{ecently} classical and heuristic optimization algorithms are widely used in order to find the global optima in a problem or cost function. Classic mathematical-based optimization algorithms, such as Gradient Descent, are used to find the optima of a function using classic mathematical approaches such as derivation. 
However, metaheuristic optimization algorithms have recently gained attraction for this purpose. The reason may be their soft computing approach for solving difficult optimization problems \cite{talbi2009metaheuristics,yang2010nature,fulcher2008computational}. 
Lots of metaheuristic algorithms are proposed in recent years. Some examples from the first generations of these algorithms are Genetic algorithm \cite{john1992holland,holland1989genetic}, Genetic Programming \cite{john1992koza}, Evolutionary Programming \cite{fogel1966artificial,de1975analysis,koza1990genetic}, Tabu Search \cite{glover1998tabu}, Simulated Annealing \cite{kirkpatrick1983optimization} and Particle Swarm Optimization \cite{kennedy1995j}.

Instances from the recent generations of the metaheuristic optimizers are Ant Lion optimizer \cite{mirjalili2015ant}, Artificial Algae algorithm \cite{uymaz2015artificial}, Binary Bat Algorithm \cite{mirjalili2014binary}, Black Hole algorithm \cite{hatamlou2013black}, Binary Cat swarm optimization \cite{crawford2015binary}, Firefly algorithm \cite{gandomi2013firefly}, Fish swarm algorithm \cite{azad2015solving}, Grey Wolf optimizer \cite{mirjalili2014grey}, Krill Herd algorithm \cite{gandomi2012krill}, Hunting search \cite{oftadeh2009new}, Imperialist Competitive algorithm \cite{atashpaz2007imperialist}, Lion algorithm \cite{yazdani2016lion}, Shuffled Frog-Leaping algorithm \cite{eusuff2006shuffled}, Multi-Verse optimizer \cite{mirjalili2016multi}, and Water Cycle algorithm \cite{sadollah2015water}.

This article proposes a meta-algorithm, originally based on metaheuristic optimization, named Tree Based Optimization (TBO). The TBO algorithm is a meta-algorithm which uses `any' other metaheuristic algorithm as a sub-algorithm.

The remainder of the paper is as follows: Section \ref{section_tree_applications} introduces the mathematical concept of tree and reviews the well-known tree applications and the search methods utilizing trees. The previous work on combination of trees and metaheuristic optimization algorithms is reviewed in Section \ref{section_tree_metaheuristics}. The methodology of the proposed algorithm is presented in Section \ref{section_methodology}. Section \ref{section_analysis} analyzes the proposed algorithm in both time complexity and the required number of iterations for an acceptable solution. The verification of the proposed algorithm is analyzed by experiments in Section \ref{section_experiments}. Finally, Section \ref{section_conclusion} concludes the paper and mentions the possible future direction of this work. 

\begin{figure}[!t]
\centering
\begin{subfigure}[b]{0.2\textwidth}
\centering
\includegraphics[width=1.4in]{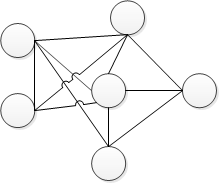} 
\caption{A sample graph}
\label{graph}
\end{subfigure}
\begin{subfigure}[b]{0.2\textwidth}
\centering
\includegraphics[width=1.4in]{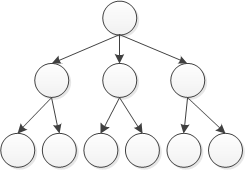}
\caption{A sample tree}
\label{tree}
\end{subfigure}
\caption{Graph and Tree.}
\label{fig:image2}
\end{figure}

\section{Tree and its Popular Applications}\label{section_tree_applications}
To define the tree, graph should be defined first. Graph is a set that consists of several vertices (nodes, or points) connected by some edges. There are two types of graph: (I) Directed graph in which edges have direction, and (II) Regular (or undirected) graph in which edges have no direction. An example of an undirected graph is depicted in Fig. \ref{graph}.

Tree is a graph without any loop. It grows in the same way that the branches of real tree grow. A sample of tree can be seen in Fig. \ref{tree}. 
Tree is a popular mathematical subject which is mostly used for search purposes.
In search problems, the landscape of search (i.e., search space) is represented by a graph.
The goal is to find the shortest path between two specific nodes. 
Different methods of tree search are reviewed briefly in the following.

\begin{figure*}[!t]
\centering
\begin{subfigure}[b]{0.3\textwidth}
\centering
\includegraphics[width=1.2in]{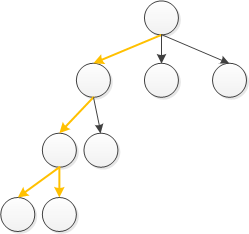} 
\caption{DFS}
\label{DFS}
\end{subfigure}
\begin{subfigure}[b]{0.3\textwidth}
\centering
\includegraphics[width=1.2in]{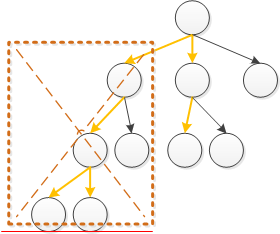}
\caption{DLS}
\label{DLS}
\end{subfigure}
\begin{subfigure}[b]{0.3\textwidth}
\centering
\includegraphics[width=1.2in]{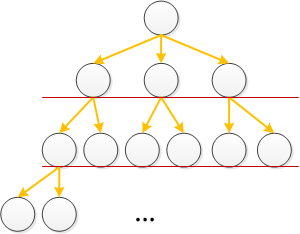}
\caption{BFS}
\label{BFS}
\end{subfigure}
\begin{subfigure}[b]{0.3\textwidth}
\centering
\includegraphics[width=1.5in]{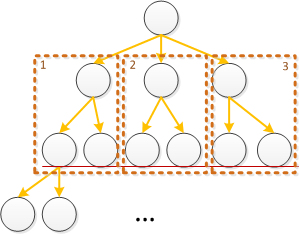}
\caption{IDS}
\label{IDS}
\end{subfigure}
\begin{subfigure}[b]{0.3\textwidth}
\centering
\includegraphics[width=1.2in]{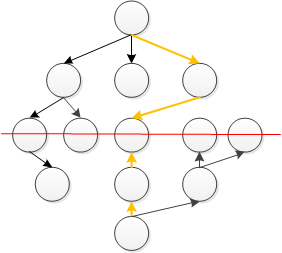}
\caption{BDS}
\label{BDS}
\end{subfigure}
\begin{subfigure}[b]{0.3\textwidth}
\centering
\includegraphics[width=0.9in]{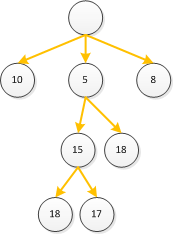}
\caption{UCS}
\label{UCS}
\end{subfigure}
\caption{Different Tree Search methods.}
\label{fig:image2}
\end{figure*}

\begin{figure}[!t]
\centering
\begin{subfigure}[b]{0.49\textwidth}
\centering
\includegraphics[width=2.5in]{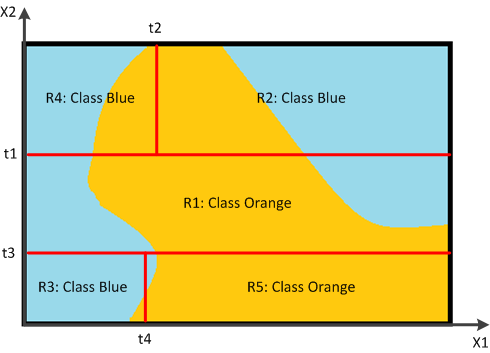} 
\caption{Regions of classification}
\label{tree_classification1}
\end{subfigure}
\bigbreak
\begin{subfigure}[b]{0.49\textwidth}
\centering
\includegraphics[width=1.5in]{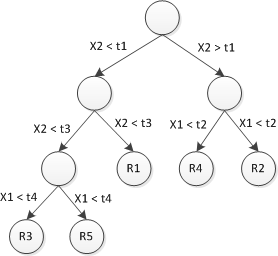}
\caption{The tree of classification}
\label{tree_classification2}
\end{subfigure}
\caption{Classification using tree.}
\label{tree_classification}
\end{figure}

Depth-First Search (DFS) method grows just one of the branches of tree and tries to find the solution by growing that branch as much as necessary, hoping to find it (see Fig. \ref{DFS}). 
One of DFS method's problems is that it may get stuck in an infinite loop if the graph of search has a loop; because it may climb a branch that oscillates between two nodes of graph. In order to solve the problem, the Depth-Limited Search (DLS) is introduced. This method forbids growing the tree more than a specific limit known by the problem. If it reaches the limit, forgets the branch and grows another one (see Fig. \ref{DLS}).

The DLS method may find a solution and get away from lock, however it does not guaranty the minimum (best) solution. Therefore, Breadth-First Search (BFS) is introduced, which grows the tree in breadth and climbs the tree in all possible branches, before growing it once more (see Fig. \ref{BFS}).
In order to have the speed of DLS method and the accuracy of BFS method, Iterative Deepening Search (IDS) is introduced. This method grows tree in one branch to a limit (as in DLS) and then grows it in another branch (as in BFS) which is depicted in Fig. \ref{IDS}.
Another tree search is Bidirectional Search (BDS) which grows two trees, one from the source and one from the destination. Meeting the two trees creates a path between the two nodes (see Fig. \ref{BDS}).

If the edges of the landscape graph has costs, the costs should be considered in search. Uniform-Cost Search (UCS) method associates a cost to each node of a tree. Cost of every tree node is sum of cost of the paths reaching that node (starting from the root). This method grows tree from the node with lowest cost (see Fig. \ref{UCS}) \cite{russell2016artificial,shahhoseini2012evolutionary}.

Another popular application of tree is using tree for regression and classification \cite{friedman2001elements}. In this method, a landscape with dimension of the data is created to be regressed or classified. A tree is applied to this landscape by optimizing the dimension to be split and the splitting point at each node of tree. Hence, the tree tries to divide the landscape into sub-regions in order to solve the classification or regression problem in each sub-region.
It continues growing until the error gets small enough. Another approach is growing the tree to a big one and then start pruning it. Pruning obviously lets error get bigger but it resolves the overfitting problem. An example of classification using tree is illustrated in Fig. \ref{tree_classification}. As is obvious in this figure, the class with the major population is the winner class.
In addition, meta-algorithms such as ADA-Boost usually use tree as their sub-classification algorithm \cite{friedman2001elements}.

\section{Previous Work on Combination of Tree \& Metaheuristics}\label{section_tree_metaheuristics}
There are lots of researches on using tree in metaheuristic algorithms or using metaheuristics in tree and graph problems \cite{blum2011hybrid}.
Branch and bound method in tree search which was first proposed in \cite{land1960automatic} has been recently combined with heuristic methods such as in \cite{nagar1995meta}.
Also, Beam search proposed by \cite{ow1988filtered} is recently combined with Ant Colony (ACO) \cite{dorigo2011ant} which is called the Beam-ACO algorithm \cite{blum2005beam,blum2008beam,lopez2010beam}.

Some researches, such as \cite{applegate2006traveling}, have worked on dividing the problem (e.g. searching) into several sub-problems in order to decrease the complexity of solving the problem. They try to merge the solutions after solving the sub-problems in order to have the solution of the complete problem. In \cite{cotta2003embedding}, the solution merging is proposed as an alternate for crossover operation in evolutionary algorithms.

Another popular approach in tree subject of research is Tree Decomposition, proposed by \cite{robertson1984graph}. Tree decomposition is a tree related to a specific graph, where each vertex of the graph appears in at least one of the nods of it. Every node of this tree consists of a set of vertices of the graph. The connected vertices of graph should be appeared together in some of the sets in nodes \cite{hammerl2015metaheuristic}. 

Lots of metaheuristic algorithms are recently used to perform the tree decomposition. As several examples, Genetic Algorithm \cite{larranaga1997decomposing,musliu2007genetic}, Ant Colony \cite{hammerl2009ant,hammerl2010ant} and iterated Local Search \cite{musliu2007generation,musliu2008iterative} have been used for tree decomposition, in the literature.
In addition, some metaheuristics have used tree decomposition in themselves, instead of using heuristic for tree decomposition, such as \cite{khanafer2012tree,fontaine2011guiding}.

It is noteworthy that some of metaheuristic algorithms such as Ant Colony and Genetic Programming deal with graph and tree subjects \cite{talbi2009metaheuristics}. Ant Colony \cite{drigo1996ant} tries to find the best path in a graph-based problem. Genetic Programming \cite{john1992koza} applies crossover and mutation operators on the set of all possible tree compositions in order to find the solution of a tree-based problem.

\section{TBO Algorithm}\label{section_methodology}
TBO algorithm uses tree in a different approach from other metaheuristic optimization algorithms. It uses tree to minimize the landscape of search (i.e., search space) in order to improve the efficiency and speed of search. This removing is based on finding and omitting the bad parts of landscape which have less fitness compared to better parts with better fitness. This approach will lead to smaller search space and makes the job of search much easier and more accurate. Algorithm will converge to a region of search space which is small enough to search it accurately and is probably the desired solution to the optimization problem.

There are three types of the proposed TBO algorithm, namely binary TBO, multi-branch TBO, and adaptive TBO. These three algorithms are different in several small points but they have the same basic idea. The basics will be covered in the first type and will not be repeated in the others for the sake of brevity.

\subsection{Binary TBO}
Binary TBO uses a binary tree to split the regions of search space. A binary tree is a tree whose every node branches into two edges. TBO algorithm climbs down a branch of tree by dividing the current region of landscape into only two parts, as shown in Fig. \ref{binary_tree_fig}.
Binary TBO algorithm includes different parts explained in the following.

\begin{figure}[!t]
\centering
\begin{subfigure}[b]{0.49\textwidth}
\centering
\includegraphics[width=0.8in]{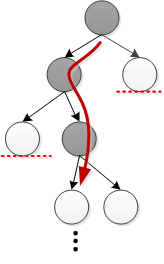} 
\caption{}
\label{binary_tree}
\end{subfigure}
\bigbreak
\begin{subfigure}[b]{0.49\textwidth}
\centering
\includegraphics[width=2in]{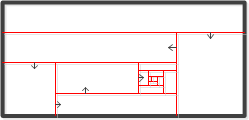}
\caption{}
\label{binary_tree2}
\end{subfigure}
\caption{Climbing down a binary tree.}
\label{binary_tree_fig}
\end{figure}

\subsubsection{Vertical \& Horizontal Split}
To divide the initial or the remaining landscape into two regions, it should be decided whether to split it vertically or horizontally. This decision can be swapped between vertical and horizontal choices, at every iteration of algorithm, formulated as:
\begin{align}
\text{Split Orientation} =
  \left\{
      \begin{array}{ll}
        \text{Vertical}   & \text{if was Horizontal},\\
        \text{Horizontal} & \text{if was Vertical}.\\
      \end{array}
    \right.
\end{align}
Another approach for this step is choosing vertical or horizontal split randomly with a probability $p$ (e.g., $p=0.5$):
\begin{align}
\text{Split Orientation} =
  \left\{
      \begin{array}{ll}
        \text{Vertical} & \text{if } ~ r \in [0,p),\\
        \text{Horizontal} & \text{if } ~ r \in [p,1],\\
      \end{array}
    \right.
\end{align}
where $r$ is a uniform random number in range $[0, 1]$, i.e., $r \sim U(0,1)$.

The binary TBO algorithm can be extended for search spaces with higher dimensions. In high-dimensional search spaces, splitting should be performed in one of the dimensions and the next iteration use another dimension to perform splitting on. In other words, the algorithm splits on dimensions in a pre-determined order in the different iterations.

\subsubsection{Splitting Point Selection}
The next step after deciding the orientation of split is choosing the splitting point in the corresponding region. It can be randomly one of the possible points in the region; however, it is better to choose a point in a limited range, slightly far away from the edges of the remaining region, to avoid splitting into a very wide and a very small region. If the the remaining region to split is $[R_1,R_2]$, then:
\begin{equation}
\label{eqn1}
\text{Splitting point} \gets U(L_1,L_2),
\end{equation}
where $L_1 = (30\% \times (R_2 - R_1)) + R_1$ and $L_2 = (70\% \times (R_2 - R_1)) + R_1$. The numbers $30\%$ and $70\%$ are hyper-parameters, and $U(\alpha,\beta)$ is a uniform random number in the range [$\alpha$,$\beta$].

\subsubsection{Performing the Sub-Algorithm}
As was already mentioned, TBO is a meta-algorithm which uses other metaheuristic algorithms as sub-algorithms. In this step of TBO algorithm, the sub-algorithm (e.g. Genetic Algorithm, Particle Swarm Optimization, simple Local search, etc) is performed to search in each of the two regions, separately.

It is also possible to use different sub-algorithms in (I) different iterations, (II) the two divided regions, or (III) according to the latest best solutions and/or size of each region.

\subsubsection{Updating global best found so far}
Almost all metaheuristic algorithms store the best solution in a memory and so does the TBO algorithm. After searching the two divided regions, each region outputs a best solution of itself. If any of these best solutions outperforms the global best found so far, the last stored global best will be replaced with it, formulated as:
\begin{align}
\text{GB}_{i} =
  \left\{
      \begin{array}{ll}
        \text{B}_i & \text{if} ~~ \text{B}_i > \text{GB}_{i-1},\\
        \text{GB}_{i-1} & \text{otherwise},\\
      \end{array}
    \right.
\end{align}
where $\text{B}_i$ is the best of the best solutions of the two regions in the $i$-th iteration and GB indicates the global best found so far.

\subsubsection{Entering a Region with a Probability}
This step can be considered as the bottleneck of the algorithm. Deciding a region to enter into is the most important decision of this algorithm because the other region will be removed from the search space and will not be searched any time unless the algorithm gets another run after termination.

At the first glance, it seems that choosing the region with better found solution is reasonable. However a clever approach does not accept this definite decision because of the possibility of trapping in local optimum solutions. Notice that, the best solutions of regions may be the local solutions and far away from the desired global best. In other words, the global best might even be in the region with the worse solution found in this iteration and the sub-algorithm has not found it in that region!
To avoid this possible problem, a probabilistic decision should be employed.
However, probabilistic decision causes another problem. What if the decision approach chooses a region in which there is not the desired global best. If this happens, the algorithm will converge to an outlier (unknown) or local solution! This problem is accepted, but it was sacrificed to avoid trapping in a local optimum solution. To overcome this problem, the algorithm can be run for several times to make sure about the answer. Note that this re-running approach is common in metaheuristic algorithms because of their randomness manner and un-sureness.

Setting the probability of entering every region is related to the best solutions found in each region by the sub-algorithms. The better the best solution of a region in the iteration, the more probable it is for entering into. This approach somehow prevents the problem of converging to an outlier but still does not ensure absolute correctness.

To formulate the probabilistic decision, the type of problem, which is minimization or maximization, matters. It is assumed that the problem is to reduce a cost function so the minimum of a function is required (otherwise the cost function can be multiplied by $-1$). In such situation, the decision is made as:
\begin{align}
\text{P}_1 =
  \left\{
      \begin{array}{ll}
        1 - \frac{\text{B}_1}{\text{B}_1 + \text{B}_2} & \text{if} \quad \text{B}_1 > 0 ~ \& ~ \text{B}_2 > 0,\\
        \frac{\text{B}_1}{\text{B}_1 + \text{B}_2} & \text{if} \quad \text{B}_1 < 0 ~ \& ~ \text{B}_2 < 0,\\
        1 - \frac{\text{BB}_1}{\text{BB}_1 + \text{BB}_2} & \text{otherwise},\\
      \end{array}
    \right.
\end{align}
where $\text{B}_1$ and $\text{B}_2$ are the best solutions of regions 1 and 2 in this iteration, respectively. 
$\text{P}_1$ is the probability of entering region 1.
$\text{BB}_1$ and $\text{BB}_2$ are also:
\begin{equation*}
\label{eqn1}
\text{BB}_1 = \text{B}_1 + |\text{min}(\text{B}_1,\text{B}_2)| + 1,
\end{equation*}

\begin{equation}
\label{eqn1}
\text{BB}_2 = \text{B}_2 + |\text{min}(\text{B}_1,\text{B}_2)| + 1,
\end{equation}
in order to make both of them positive and compare them easily.
The decision of entering a region is then computed as:
\begin{align}
\text{Enter to} =
  \left\{
      \begin{array}{ll}
        \text{Region 1} & \text{if} ~~ r < \text{P}_1,\\
        \text{Region 2} & \text{if} ~~ r \geq \text{P}_1,\\
      \end{array}
    \right.
\end{align}
where $r$ is a uniform random number. i.e., $r \sim U(0,1)$.

After entering the selected region, the other region is removed from the search space and the boundaries of the landscape are updated as:
\begin{align}
\label{eqn1}
  \left\{
      \begin{array}{ll}
        y_{min} = \text{SP} & ~ \text{if horizontal split, entering up}\\
        y_{max} = \text{SP} & ~ \text{if horizontal split, entering down}\\
        x_{min} = \text{SP} & ~ \text{if vertical split, entering right}\\
        x_{max} = \text{SP} & ~ \text{if vertical split, entering left}\\
      \end{array}
    \right.
\end{align}
where $\text{SP}$ is the splitting point.

\subsubsection{[Optional] Adaptive Number of Sub-Algorithm Particles, According to Iteration}
The number of particles can be reduced in an adaptive way as the algorithm goes forward because by improving the algorithm, the search space becomes smaller and smaller and there is no need to have as many particles as in the first iterations with the whole search space. Although this may cause a small drop in accuracy, but it improves the speed of algorithm especially in huge problems. If the step of reduction is denoted by integer value $\Delta_1$ (e.g. is one), the number of particles can be updated as:
\begin{equation}
\label{eqn1}
\text{NP}_i \gets \text{max}(\text{NP}_{i-1} - \Delta_1, \, \text{a lower bound}),
\end{equation}
where $\text{NP}_i$ is the number of particles in the $i$-th iteration.

\subsubsection{[Optional] Adaptive Number of Sub-Algorithm Particles, According to Region Size}
In the previous section, the number of particles of the sub-algorithm was changed as the algorithm proceeds. This number can also be changed according to the size of the divided regions. Consider two regions separated by a splitting line. One of them is much larger than the other. It is obvious that the larger region needs more particles to search almost the entire parts of it, though the other one can be searched by fewer particles. Hence, the number of particles can be updated as:
\begin{align}
& \text{NP}_1 \gets \text{round}(\text{NPP} \times \frac{\text{size}_1}{\text{size}_1 + \text{size}_2}),\\
& \text{NP}_2 \gets \text{round}(\text{NPP} \times \frac{\text{size}_2}{\text{size}_1 + \text{size}_2}),
\end{align}
where $\text{NP}_i$, NPP, and $\text{size}_i$ are the number of particles in the $i$-th region, the total number of particles in both regions, and the size of $i$-th region, respectively.

\begin{figure}[!t]
\centering
\includegraphics[width=2in]{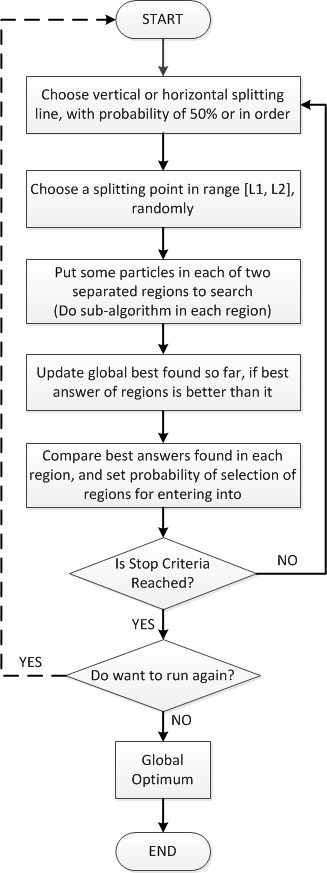}
\caption{Binary TBO flowchart.}
\label{TBO_flowchart}
\end{figure}

\subsubsection{[Optional] Adaptive Number of Sub-Algorithm Iterations}
As the algorithm proceeds, the size of the search space becomes smaller. Therefore, the number of global iterations of the sub-algorithm to be run in the two regions can be reduced in an adaptive way to improve the speed of the algorithm. It can be formulated as:
\begin{equation}
\label{eqn1}
\text{SI}_i \gets \text{max}(\text{SI}_{i-1} - \Delta_2, \, \text{a lower bound}),
\end{equation}
where $\text{SI}_i$ is the number of sub-algorithm iterations in the $i$-th iteration and $\Delta_2$ is an integer denoting the step of reduction (e.g. is one). 

\subsubsection{Termination}
As mentioned before, the algorithm might converge to an outlier. It rarely occurs but might happen in difficult landscapes with lots of local optimum solutions and a global optima hard to find.
In order to be sure about the answer, the algorithm can be run for several times. This re-running approach is common in metaheuristic algorithms because of their randomness manner and un-sureness.

To terminate one run of TBO algorithm, it should have done the splitting for several defined times indicating the depth of the tree. The depth of tree, i.e., the number of iterations of a run of TBO algorithm, should be chosen according to the size of search landscape and the required accuracy rate.

\subsubsection{Algorithm flowchart}
As an abstract to the explained details, The flowchart of binary TBO algorithm is illustrated in Fig. \ref{TBO_flowchart}.

\subsubsection{Experimental result}
To examine the proposed binary TBO, the \emph{Schaffer} benchmark (see Table \ref{benchmarks_table} and Fig. \ref{benchmarks_figure}) is tested. As is shown in Fig. \ref{binary_test1}, the algorithm has split the landscape in binary regions at every step and finally converged to the global minimum. In this test, the used sub-algorithm is Genetic Algorithm (GA) \cite{john1992holland,holland1989genetic} and the number of particles (chromosomes) of sub-algorithm is five (in each region). The depth of tree is also chosen to be 10.

As already was mentioned, the algorithm may converge to an outlier motivating us to run it for several times. Though it happens rarely but one of these happenings is depicted in Fig. \ref{binary_test2}.

\begin{figure}[!t]
\centering
\begin{subfigure}[b]{0.5\textwidth}
\centering
\includegraphics[width=3in]{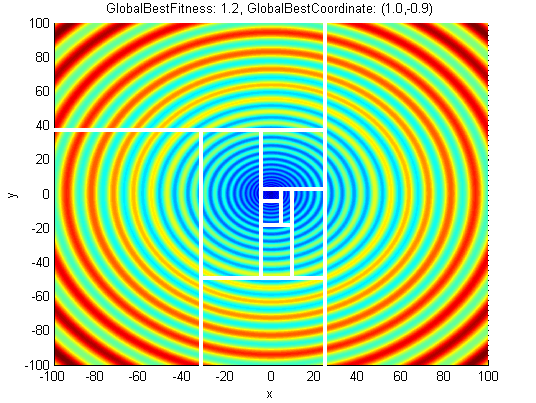} 
\caption{Converging to the global minimum}
\label{binary_test1}
\end{subfigure}
\bigbreak
\begin{subfigure}[b]{0.5\textwidth}
\centering
\includegraphics[width=3in]{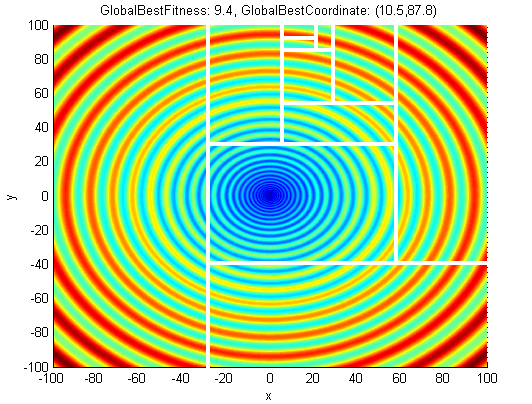}
\caption{Converging to an outlier}
\label{binary_test2}
\end{subfigure}
\caption{Binary TBO experiment.}
\label{binary_test}
\end{figure}

\subsection{Multi-branch TBO}
It may be better if the splitting step gets some changes according to the dimensionality of the search space, denoted by $d$. Since the dimensionality of the search space might be huge in real experiments, binary splitting may not satisfy the complexity of the problem and the possibility of convergence to an outlier increases. In order to consider the dimensionality of search space and its complexity, the number of splittings in each iteration of TBO algorithm is recommended to be equal to the dimensionality of the search space.

The multi-branch TBO algorithm uses a tree with a branching factor $\alpha$, meaning that every node splits into $\alpha$ branches.
The $\alpha$ is an even number and is obtained as follows:
\begin{equation}
\label{eqn1}
\alpha = 2^d,
\end{equation}
which means the number of splittings in every iteration is equal to $d$ (every dimension is split).

This algorithm splits the remaining landscape into $\alpha$ partitions and performs the sub-algorithm in each of them, at every iteration. In other words, a splitting line is created for each dimension and these lines are perpendicular to each other. The splitting into $\alpha$ parts is actually climbing down the multi-branch tree as is shown in Fig. \ref{multi_branch} for a two-dimensional search space.

\begin{figure}[!t]
\centering
\includegraphics[width=1.5in]{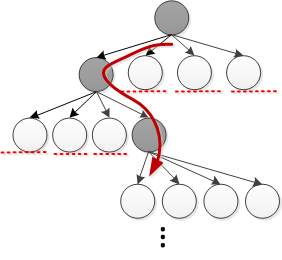}
\caption{Climbing down a multi-branch Tree.}
\label{multi_branch}
\end{figure}

\begin{figure}[!t]
\centering
\includegraphics[width=3in]{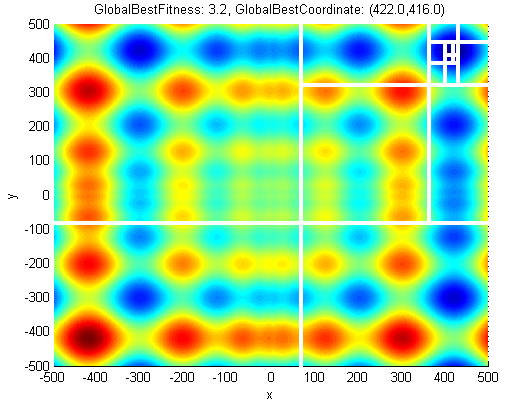}
\caption{Multi-branch TBO experiment.}
\label{multi_test}
\end{figure}

The settings and steps of multi-branch TBO is similar to binary TBO with some small modifications. For example, setting the probability of entering a region is formulated as:
\begin{align}
\text{P}_i =
  \left\{
      \begin{array}{ll}
        1 - \frac{\text{B}_i}{\sum_{\text{regions}} \text{B}_j} & \text{if} ~~ \text{B}_j > 0, ~~ \forall \text{ regions},\\
        \frac{\text{B}_i}{\sum_{\text{regions}} \text{B}_j} & \text{if} ~~ \text{B}_j<0, ~~ \forall \text{ regions},\\
        1 - \frac{\text{BB}_i}{\sum_{\text{regions}} \text{BB}_j} & \text{otherwise}\\
      \end{array}
    \right.
\end{align}
\begin{align}
& \text{BB}_i = \text{B}_i + |\min_{\forall j}(\text{BB}_j)| + 1,\\
& \text{sort}(\text{P}_i ~ ; ~ \text{descending}),
\end{align}
\begin{align}
\text{Enter to} =
  \left\{
      \begin{array}{ll}
        \text{Region }m & \text{if} \quad U(0,1) < \text{P}_m,\\
        \text{Region }n & \text{else if} \quad U(0,1) < \text{P}_{n},\\
        \vdots\\
      \end{array}
    \right.
\end{align}
where $\text{P}_m > \text{P}_n > ...$, which had been sorted.

In Multi-branch TBO for a two-dimensional search space, removing the three regions to which the algorithm has not entered is performed as:
\begin{equation}
\label{eqn1}
  \left\{
      \begin{array}{l}
      	\left.
      	\begin{array}{l}
        y_{min} = \text{SP}_{\text{vertical}}\\
        x_{min} = \text{SP}_{\text{horizontal}}\\
        \end{array}
        \right\} \text{entering up-right}
        \\
        \left.
      	\begin{array}{l}
        y_{min} = \text{SP}_{\text{vertical}}\\
        x_{max} = \text{SP}_{\text{horizontal}}\\
        \end{array}
        \right\} \text{entering up-left}
        \\
        \left.
      	\begin{array}{l}
        y_{max} = \text{SP}_{\text{vertical}}\\
        x_{min} = \text{SP}_{\text{horizontal}}\\
        \end{array}
        \right\} \text{entering down-right}
        \\
        \left.
      	\begin{array}{l}
        y_{max} = \text{SP}_{\text{vertical}}\\
        x_{max} = \text{SP}_{\text{horizontal}}\\
        \end{array}
        \right\} \text{entering down-left}
        \\
      \end{array}
    \right.
\end{equation}

An experiment of multi-branch TBO is performed for the \emph{Schwefel} benchmark (see Table \ref{benchmarks_table} and Fig. \ref{benchmarks_figure}). In this test, the sub-algorithm is Particle Swarm Optimization (PSO) algorithm \cite{kennedy1995j} with five particles (for each region in which sub-algorithm is run). The depth of tree is chosen to be five and the branching factor of each node of tree is four because the dimensionality of landscape is two. The result of experiment is shown in Fig. \ref{multi_test}. As is obvious in this figure, The TBO has removed the parts of landscape with high costs and converges to the global minimum. It has escaped from the local minimum solutions as expected.

\subsection{Adaptive TBO}\label{section_adaptive_TBO}
The TBO algorithm can be adaptive in branching factor $\alpha$. This parameter can be adapted by multiple causes. It can be adapted by the best fitness of the region which is going to be split. The better the fitness of region to be split, the larger the branching factor can get, for it is more worthful to be searched and is worthy to run more sub-algorithms in it. 

Another approach for adaptation is making algorithm adaptive by the size of region. The larger the size of region which is going to be split, the larger the branching factor is worthy to get because it has more space required to be searched.

There is also another approach for adaptation. As the algorithm improves and iterations go forward, the branching factor can get smaller since at the first iterations, the landscape is large requiring more search.

All the mentioned adaptations can be applied on both binary TBO and multi-branch TBO.
An experiment of adaptive TBO is done here in which adaptation is applied on binary TBO and is according to the iteration number. The total number of iterations (depth of tree) is set to be 10 and branching factor ($\alpha$) of the first iteration is 4, the second is 3 and the other iterations have $\alpha = 2$ as in Binary TBO. 
This adaptive TBO example is climbing down a tree with four, three, and two branches in the first, second, and other levels of its depth, respectively (see Fig. \ref{adaptive}).
The result is shown in Fig. \ref{adaptive_test}.

\begin{figure}[!t]
\centering
\includegraphics[width=1.5in]{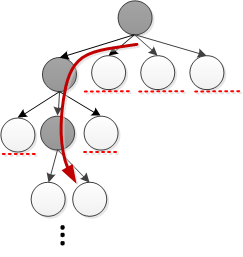}
\caption{Climbing down the tree of adaptive TBO.}
\label{adaptive}
\end{figure}

\begin{figure}[!t]
\centering
\includegraphics[width=3in]{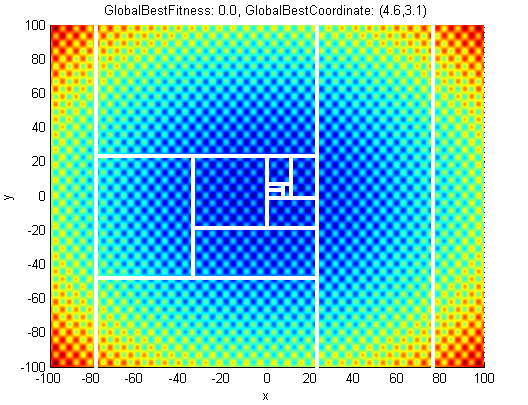}
\caption{Adaptive TBO experiment.}
\label{adaptive_test}
\end{figure}

As an abstraction to all the details explained and described in previous sections, the pseudo-code of TBO algorithm can be seen in Algorithm \ref{TBO_algorithm}. 


\SetAlCapSkip{0.5em}
\IncMargin{0.8em}
\begin{algorithm}[h]
\DontPrintSemicolon
    \textbf{START:}
	\textbf{Initialize} parameters\;
	\While{Stop criteria is not reached}{
	    number of particles $\gets$ number of particles $- \Delta_1$\;
	    sub-algorithm iter. $\gets$ sub-algorithm iter. $- \Delta_2$\;
	    $\alpha \propto$ ``Size of region'' or `` Iter. number''\;
	    \uIf{Multi-Branching}{
	        \For{$j$ from $1$ to $d$}{
	            \For{$i$ from $1$ to $\alpha$}{
	                splitting point $\gets$ a point $\in [L_1, L_2]$
	            }
	        }
	    }
	    \Else{
	        \textbf{Swap} state of split on dimensions (e.g., vertical or horizontal)\;
	        \For{$i$ from $1$ to $\alpha$}{
	            splitting point $\gets$ a point $\in [L_1, L_2]$
	        }
	    }
	    \ForAll{regions}{
	        number of particles $\gets$ size of region\;
	        \textbf{Do} the optimization sub-algorithm
	    }
	    Best of iteration $\gets$ Best of best of regions\;
	    \If{Best of iteration is better than Global best}{
	        Global best $\gets$ Best of iteration
	    }
	    \ForAll{regions}{
	        \textbf{Set} Probability of entering the region
	    }
	    Enter which region? $\gets$ Probability $U(0, 1)$\;
	    \textbf{Remove} outer region from search space
	}
	\uIf{Want to run again}{
	    \textbf{Goto} START of procedure
	}
	\Else{
	    \textbf{Return} the Global best
	}
\caption{TBO Algorithm}\label{TBO_algorithm}
\end{algorithm}
\DecMargin{0.8em}

\section{Analysis of TBO Algorithm}\label{section_analysis}

\subsection{Time Complexity of TBO Algorithm}

\begin{theorem}
The time complexity of TBO algorithm is $\mathcal{O}(k)$ where $k$ is the depth of tree (or number of splits) if the sub-algorithm is considered to be an oracle with time complexity $\mathcal{O}(1)$.
\end{theorem}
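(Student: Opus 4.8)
The plan is to read the cost directly off the pseudo-code in Algorithm~\ref{TBO_algorithm}: bound the work done in one pass of the outer \texttt{while}-loop by a constant, then multiply by the number of passes, which is exactly the tree depth $k$.

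First I would fix the quantities that are constant with respect to $k$: the dimensionality $d$ of the search space, the branching factor $\alpha$ (equal to $2$ for binary TBO and $2^d$ for multi-branch TBO), the number of sub-algorithm particles, and the number of sub-algorithm iterations. Since the statement treats each invocation of the sub-algorithm as an oracle costing $\mathcal{O}(1)$, and since $\alpha$ depends only on $d$ and not on $k$, all of these count as $\mathcal{O}(1)$ for the purpose of this bound.

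Next I would account for the body of a single iteration line by line: choosing the split orientation (a swap, or one comparison of a uniform random number against $p$) is $\mathcal{O}(1)$; generating the splitting points is a loop of length at most $\alpha$, hence $\mathcal{O}(\alpha)=\mathcal{O}(1)$; running the sub-algorithm in each of the $\alpha$ regions is $\alpha$ oracle calls, i.e.\ $\mathcal{O}(\alpha)=\mathcal{O}(1)$; extracting the best-of-iteration and comparing it against the global best is $\mathcal{O}(\alpha)$; forming the entry probabilities $\text{P}_i$ and the auxiliary quantities $\text{BB}_i$, sorting them, and sampling a region is $\mathcal{O}(\alpha\log\alpha)=\mathcal{O}(1)$; and updating the $\mathcal{O}(d)$ boundary coordinates to discard the regions not entered is $\mathcal{O}(d)=\mathcal{O}(1)$. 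The optional adaptive updates of particle count and sub-algorithm iteration count add only $\mathcal{O}(1)$ more. Summing gives a per-iteration cost of $\mathcal{O}(1)$.

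Finally, the outer \texttt{while}-loop performs exactly one split per iteration, and by the termination rule it runs for the prescribed tree depth, i.e.\ $k$ iterations; re-running the whole procedure is done a fixed number of times and only multiplies the bound by a constant. Hence the total cost is $k\cdot\mathcal{O}(1)=\mathcal{O}(k)$. The one delicate point, which I would flag explicitly at the outset, is the multi-branch case: one must treat $\alpha=2^d$ as held constant, because the theorem concerns scaling in $k$ and not in $d$; otherwise the $\mathcal{O}(\alpha)$ terms above are exponential in $d$ rather than $\mathcal{O}(1)$. With that convention stated, the argument is essentially a bookkeeping exercise and the main ``obstacle'' is simply being careful that every line of the loop body is indeed independent of $k$.
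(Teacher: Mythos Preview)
Your proposal is correct and follows the same approach as the paper: argue that the tree has depth $k$, that each iteration costs $\mathcal{O}(1)$ under the oracle assumption, and multiply. The paper's own proof is a two-sentence version of exactly this, whereas you have supplied a careful line-by-line accounting (and explicitly flagged the $\alpha=2^d$ convention), so your argument is strictly more detailed but not different in substance.
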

\begin{proof}
The tree is grown to the depth $k$ in all the binary, multi-branch, and the adaptive TBO algorithms. Assuming that the sub-algorithm is an oracle having time complexity $\mathcal{O}(1)$, the TBO algorithm runs for $\mathcal{O}(k)$ times.
\end{proof}

\begin{table*}[!t]
\begin{minipage}{\textwidth}
\renewcommand{\arraystretch}{2}  
\caption{Benchmarks used for test}
\label{benchmarks_table}
\centering
\begin{tabular}{l l l c}
\hline
\hline
\multicolumn{1}{c}{\textbf{Name of Benchmark}} & \multicolumn{1}{c}{\textbf{Function}} & \multicolumn{1}{c}{\textbf{Range}} & \multicolumn{1}{c}{\textbf{Step}} \\
\hline
\textbf{F1:} Sphere & $f(x) = \sum_{i=1}^d x_i^2$ & $x_i \in [-100, 100]$ & $0.1$\\
\textbf{F2:} Griewank & $f(x) = \frac{1}{4000} \sum_{i=1}^d x_i^2 - \prod_{i=1}^d \text{cos}(\frac{x_i}{\sqrt{i}}) + 1$ & $x_i \in [-100, 100]$ & $0.1$\\
\textbf{F3:} Schaffer & $f(x) = \sum_{i=1}^{d-1} (x_i^2 + x_{i+1}^2)^{0.25} [\text{sin}^2 (50 \times (x_i^2 + x_{i+1}^2)^{0.1}) + 1.0]$ & $x_i \in [-100, 100]$ & $0.1$\\
\textbf{F4:} Schwefel & $f(x) = 418.982\,d - \sum_{i=1}^d x_i \text{sin} (\sqrt{|x_i|})$ & $x_i \in [-500, 500]$ & $1$\\
\hline
\hline
\end{tabular}
\end{minipage}
\end{table*}

\subsection{Analysis of Tree Depth \& Algorithm Progress}

\begin{theorem}\label{theorem_depth_binary}
Assuming that all the selected regions include the global optima, the required depth of tree (or the number of splits) in binary TBO algorithm is $k = -d\,(1+\log_2(\varepsilon))$ for having a probability of success greater than $p = 1 - \varepsilon$ for reaching the optima.
\end{theorem}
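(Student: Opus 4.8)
The plan is to control how fast the surviving box contracts and then cash that in for a success probability. First I would fix the geometry of a single iteration: binary TBO cuts the current box along one coordinate, cycling through all $d$ coordinates in a fixed order, and places the cut uniformly in the middle $40\%$ of that coordinate's current interval (the $L_1=R_1+0.3(R_2-R_1)$, $L_2=R_1+0.7(R_2-R_1)$ rule). Under the standing hypothesis that every region the algorithm enters still contains the global optimum, entering the surviving half multiplies the length of the cut coordinate's interval by a random factor $f$ lying in $[0.3,0.7]$; in the generic case $f$ is uniform on that interval, so $\mathbb{E}[\log_2 f]=\tfrac{1}{0.4}\int_{0.3}^{0.7}\log_2(u)\,du\approx-1$, i.e.\ a visit to a coordinate behaves, in expectation, like halving its interval, and in the worst case still contracts it by at least the factor $0.7$.

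Next I would push this through the round-robin schedule. After $k$ splits each of the $d$ coordinates has been cut about $k/d$ times, so the relative width of the surviving box along a coordinate is a product $\prod_{j=1}^{k/d}f_j$ of $k/d$ i.i.d.\ copies of $f$; taking logarithms, $S:=\sum_{j=1}^{k/d}\log_2 f_j$ has mean $-k/d$. I would then take ``reaching the optima'' to mean that this relative width has been driven down to $2\varepsilon$ in every coordinate --- the optimum has been localised to a box that is $2\varepsilon$ times narrower than the original along each axis, which is the regime in which the sub-algorithm (idealised as an oracle) returns it. Equating the expected relative width with the target, $2^{-k/d}=2\varepsilon$, and solving for $k$ gives exactly $k=-d\,(1+\log_2\varepsilon)$.

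Finally I would upgrade the expectation to the stated probability. Since the $\log_2 f_j$ are bounded (they live in $[\log_2 0.3,\log_2 0.7]$), i.i.d., with mean $-1$, a Chernoff/Hoeffding tail bound on $S$ shows that $\Pr[S>-k/d]=\Pr[\text{relative width}>2\varepsilon]$ is small, and choosing $k$ as above (with at most a lower-order additive slack folded into the $30\%/70\%$ and oracle-threshold hyper-parameters) forces this failure probability below $\varepsilon$, so the success probability exceeds $1-\varepsilon$. A cruder but fully explicit variant keeps only the deterministic bound $f\le0.7$ and observes $0.7^{k/d}\le 2\varepsilon$ as soon as $k$ exceeds a constant multiple of $-d(1+\log_2\varepsilon)$; matching the exact constants of the statement then amounts to idealising each visit as an exact factor-$\tfrac12$ contraction.

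The step I expect to be the real obstacle is precisely this last identification: saying exactly what ``probability of success for reaching the optima'' is, and why it comes out to $1-\varepsilon$ rather than merely ``large''. The clean closed form suggests the intended argument treats the per-split contraction as exactly $1/2$ and reads $1-\varepsilon$ off a modelling link between the final box size and the chance the sub-algorithm returns the true optimum; making that link rigorous --- or replacing it by an honest concentration bound, at the price of $O(d)$-type corrections to $k$ --- is where the genuine work sits. The remaining bookkeeping (that $k/d$ need not be an integer, and that the first $k \bmod d$ coordinates are cut once more than the others) only perturbs $k$ by an additive $O(d)$ and is absorbed by rounding.
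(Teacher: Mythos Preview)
Your proposal is considerably more careful than the paper's own argument, and you have in fact already identified the paper's entire proof inside your last paragraph. The paper does exactly what you flagged as the ``clean closed form'' route: it simply declares that the expected split point is at the half of the remaining region, so each split leaves a fraction $\tfrac{1}{2}$ of the volume; after $k$ splits the remaining volume is $(\tfrac{1}{2})^k$; the target neighbourhood is a $d$-cube of side $2\varepsilon$ around the optimum, hence volume $(2\varepsilon)^d$; and solving $(2\varepsilon)^d=(\tfrac{1}{2})^k$ gives $k=-d(1+\log_2\varepsilon)$. That is the whole proof --- there is no analysis of the actual $[0.3,0.7]$ split distribution, no per-coordinate bookkeeping, and no concentration inequality.

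In particular, your suspicion about the ``probability of success greater than $1-\varepsilon$'' clause is well founded: the paper never derives it. The $\varepsilon$ enters only as the half-side of the terminal box, and the probability statement is a modelling identification rather than a proved bound. Your Chernoff/Hoeffding layer and the worst-case $0.7$ contraction are genuine additions that the paper does not attempt; they buy an honest probability guarantee at the cost of constants, whereas the paper's argument buys the exact formula by idealising each split as a deterministic halving of volume. Either is acceptable as a write-up, but if you want to match the paper you can collapse your argument to the two-line expected-volume calculation and drop the tail-bound apparatus.
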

\begin{proof}
In binary TBO, every branching splits the search space into two regions. On average, the expected value of the split point is at the half of the remaining region resulting in $\frac{1}{2}$ space remained in every splitting. Therefore, after $k$ splits (tree depth $k$), the remained region has area or volume equal to $(\underbrace{\frac{1}{2} \times \dots \times \frac{1}{2}}_{k \text{ times}}) = (\frac{1}{2})^k$. This area or volume (of the last remained region) is required to be $(2\,\varepsilon)^d$ finally if we consider a $d$-dimensional cube having side length $(2\,\varepsilon)$ around the optima. To better explain, we consider a rectangular neighborhood around the optima at its center where the sides have distance $\varepsilon$ from the center. Thus, after $k$ number of splits, we have $(2\,\varepsilon)^d = (\frac{1}{2})^k$ resulting in $k = -d\,(1+\log_2(\varepsilon))$. Note that $\varepsilon$ is a small positive number so the $k$ is a positive number.
\end{proof}

\begin{theorem}\label{theorem_depth_multibranch}
Assuming that all the selected regions include the global optima, the required depth of tree (or the number of splits) in multi-branch TBO algorithm is $k = -(1+\log_2(\varepsilon))$ for having a probability of success greater than $p = 1 - \varepsilon$ for reaching the optima.
\end{theorem}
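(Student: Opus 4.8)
The plan is to mirror the argument of Theorem~\ref{theorem_depth_binary}, exploiting the fact that one branching step of multi-branch TBO is equivalent to $d$ consecutive branching steps of binary TBO.

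First I would recall that in multi-branch TBO the branching factor is $\alpha = 2^d$, so at every iteration a splitting hyperplane is introduced along each of the $d$ coordinate dimensions. Exactly as in the binary case, the expected location of each split point is the midpoint of the current extent in the corresponding dimension, so each one-dimensional split retains, in expectation, a factor $\tfrac{1}{2}$ of that dimension's length in the region that is entered. Since the $d$ splits act on distinct, mutually perpendicular dimensions, the expected volume of the region kept after a single iteration is $(\tfrac{1}{2})^d$ times the volume of the previous region.

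Next, iterating over the depth $k$ of the tree, the expected volume of the final retained region is $\big((\tfrac{1}{2})^d\big)^k = (\tfrac{1}{2})^{dk}$. Following the same neighborhood argument as in the binary proof, I would require this volume to match the volume $(2\varepsilon)^d$ of the $d$-dimensional cube of half-side $\varepsilon$ centered at the optimum; then $(2\varepsilon)^d = (\tfrac{1}{2})^{dk}$, and taking $\log_2$ gives $d(1 + \log_2\varepsilon) = -dk$, hence $k = -(1 + \log_2\varepsilon)$, which is positive because $\varepsilon$ is a small positive number. Equivalently, one may note directly that a depth-$k$ multi-branch tree performs $dk$ one-dimensional splits, the same as a depth-$dk$ binary tree, so substituting tree depth $dk$ into Theorem~\ref{theorem_depth_binary} and solving for $k$ gives the same answer.

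The step that I expect to need the most care is the one already implicit in the binary proof: passing from the probabilistic statement (success probability exceeding $1-\varepsilon$) to the deterministic volume bound. Under the stated hypothesis that every selected region still contains the global optimum, shrinking the retained volume down to the $\varepsilon$-cube around the optimum is what guarantees the returned point lies within $\varepsilon$ of it; identifying the success probability $1-\varepsilon$ with this geometric tolerance, and accounting for the fact that the $\tfrac{1}{2}$ shrinkage per split is only an average rather than a worst case, is the delicate point. I would handle it by adopting the same expected-value convention used in Theorem~\ref{theorem_depth_binary} rather than re-deriving it here.
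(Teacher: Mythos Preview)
Your proposal is correct and follows essentially the same route as the paper: observe that each multi-branch step retains, in expectation, a $(\tfrac{1}{2})^d$ fraction of the volume, set the remaining volume after depth $k$ equal to $(2\varepsilon)^d$, and solve for $k$. The paper's proof is terser and omits your alternative ``substitute $dk$ into Theorem~\ref{theorem_depth_binary}'' shortcut and your discussion of the expected-value convention, but the argument is the same.
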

\begin{proof}
The proof is similar to the proof of the previous theorem except that at every split, $2^d$ regions are divided resulting in the last remained area or volume $(\underbrace{\frac{1}{2^d} \times \dots \times \frac{1}{2^d}}_{k \text{ times}}) = (\frac{1}{2^d})^k$. Having $(2\,\varepsilon)^d = (\frac{1}{2^d})^k$ results in $k = -(1+\log_2(\varepsilon))$.
\end{proof}

\begin{figure}[!t]
\centering
\begin{subfigure}[b]{0.23\textwidth}
\centering
\includegraphics[width=1.5in]{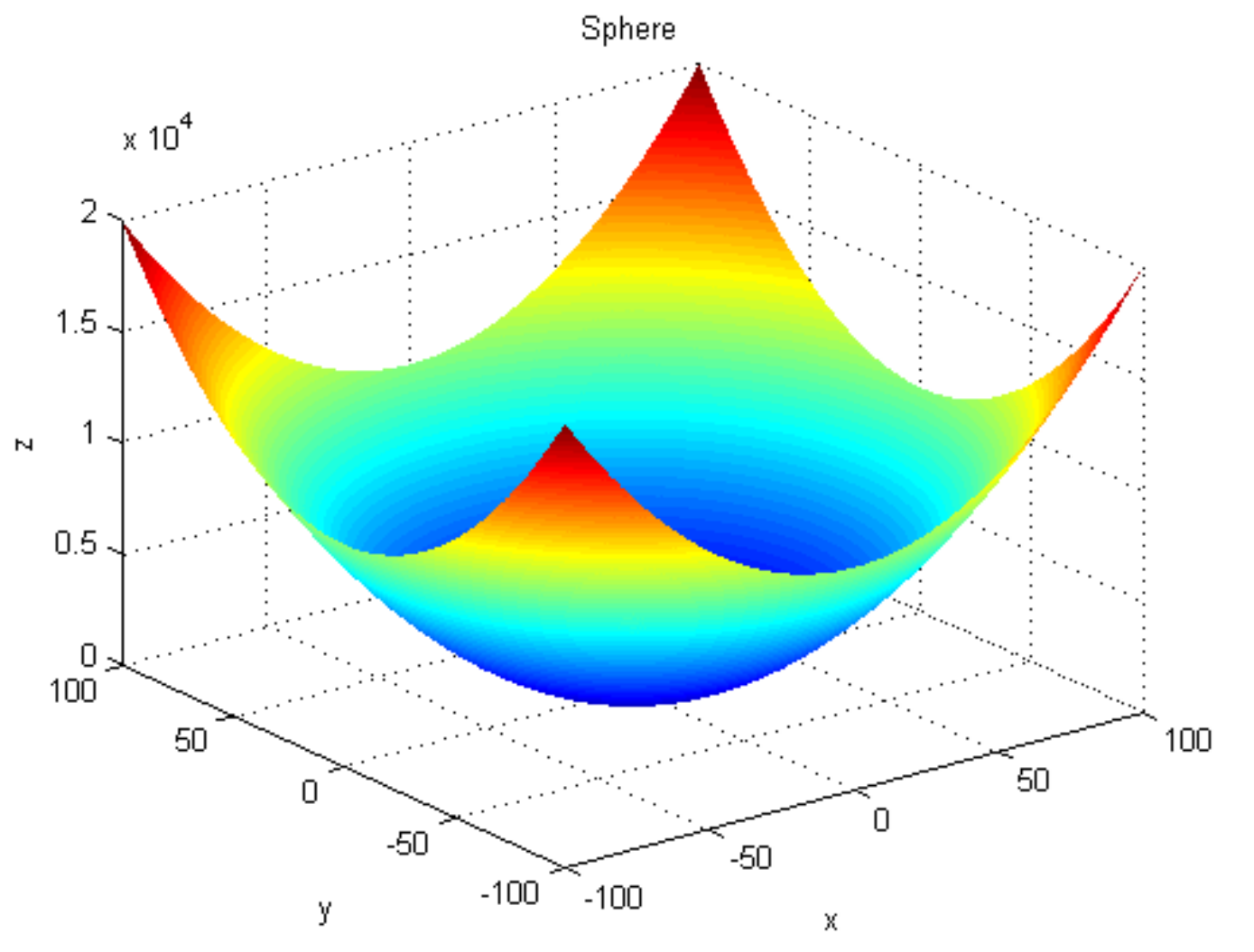} 
\caption{Sphere}
\label{fig:subim1}
\end{subfigure}
\begin{subfigure}[b]{0.23\textwidth}
\centering
\includegraphics[width=1.5in]{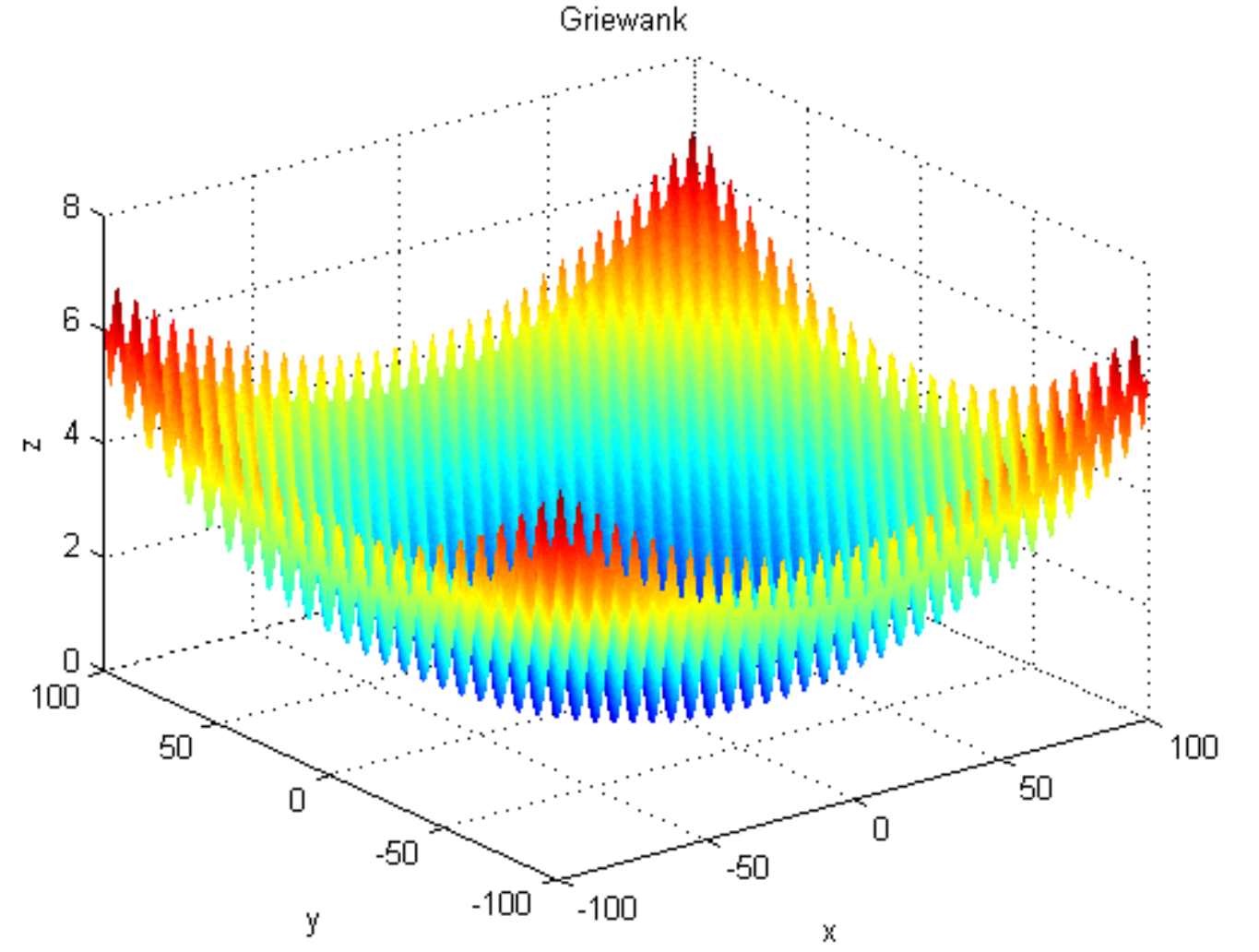}
\caption{Griewank}
\label{fig:subim2}
\end{subfigure}
\bigbreak
\begin{subfigure}[b]{0.23\textwidth}
\centering
\includegraphics[width=1.5in]{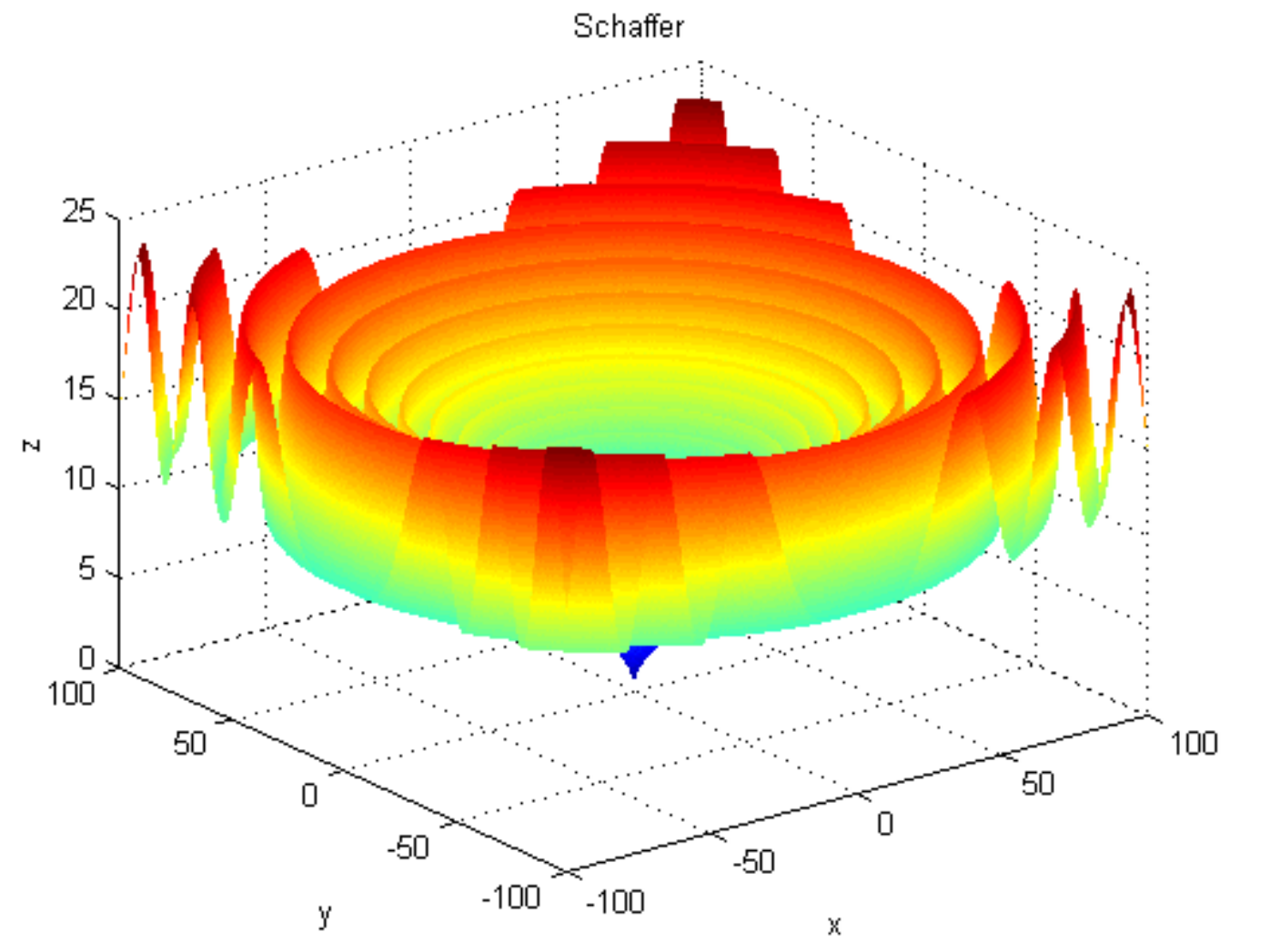}
\caption{Schaffer}
\label{fig:subim2}
\end{subfigure}
\begin{subfigure}[b]{0.23\textwidth}
\centering
\includegraphics[width=1.5in]{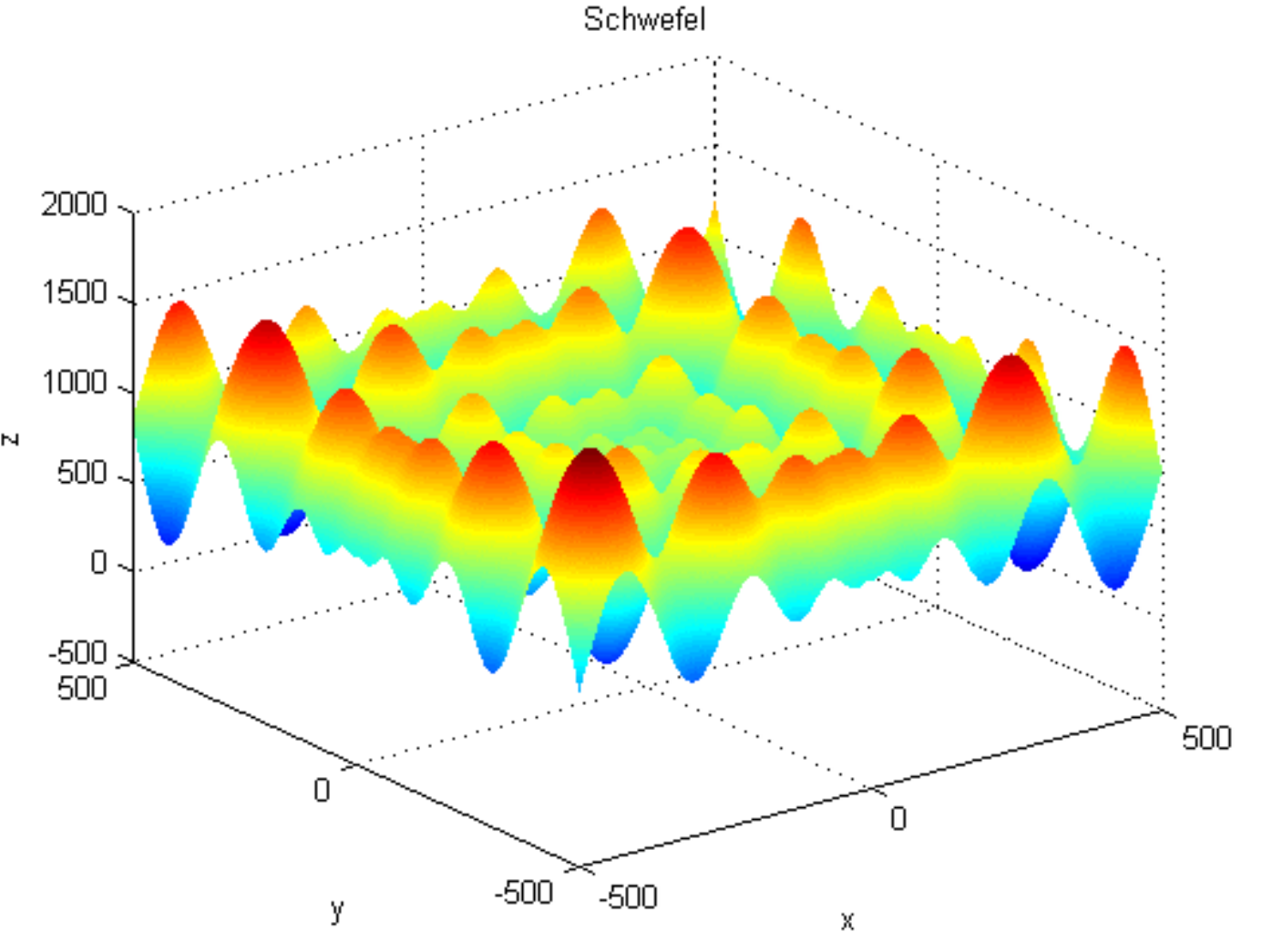}
\caption{Schwefel}
\label{fig:subim2}
\end{subfigure}
\caption{Benchmarks of Table \ref{benchmarks_table}.}
\label{benchmarks_figure}
\end{figure}

\begin{table}[!t]
\renewcommand{\arraystretch}{1.3}  
\caption{Average errors of the found best answer from actual best answer in two dimensional experiments}
\label{table_test}
\centering
\scalebox{0.79}{    
\begin{tabular}{l | l | l | c c c}
\hline
\hline
\multicolumn{3}{c|}{} & LS & PSO & GA\\
\hline
\multirow{8}{*}{ \textbf{F1: Sphere}} 
& \multirow{2}{*}{Binary TBO} 
& 5 Particles & 6.25\% & 0.01\% & 0.23\%\\
& & 10 Particles & 1.52\% & 0.01\% & 0.07\%\\
\cline{2-6}
& \multirow{2}{*}{Multi-branch TBO} 
& 5 Particles & 2.51\% & 0.00\% & 0.09\%\\
& & 10 Particles & 1.53\% & 0.00\% & 0.07\%\\
\cline{2-6}
& \multirow{2}{*}{Adaptive TBO} 
& 5 Particles & 3.81\% & $\simeq$ 0\% & 0.06\%\\
& & 10 Particles & 1.36\% & $\simeq$ 0\% & 0.04\%\\
\cline{2-6}
& \multirow{2}{*}{Not using TBO} 
& 5 Particles & 0.67\% & 0.19\% & 0.29\%\\
& & 10 Particles & 0.38\% & 0.19\% & 0.06\%\\
\hline
\hline
\multirow{8}{*}{ \textbf{F2: Griewank}} 
& \multirow{2}{*}{Binary TBO} 
& 5 Particles & 9.20\% & 1.63\% & 6.19\%\\
& & 10 Particles & 3.50\% & 0.91\% & 4.18\%\\
\cline{2-6}
& \multirow{2}{*}{Multi-branch TBO} 
& 5 Particles & 3.78\% & 0.30\% & 2.51\%\\
& & 10 Particles & 4.81\% & 0.15\% & 2.12\%\\
\cline{2-6}
& \multirow{2}{*}{Adaptive TBO} 
& 5 Particles & 7.21\% & 1.17\% & 4.40\%\\
& & 10 Particles & 3.98\% & 1.04\% & 2.72\%\\
\cline{2-6}
& \multirow{2}{*}{Not using TBO} 
& 5 Particles & 5.13\% & 2.61\% & 6.84\%\\
& & 10 Particles & 4.14\% & 2.24\% & 4.38\%\\
\hline
\hline
\multirow{8}{*}{ \textbf{F3: Schaffer}} 
& \multirow{2}{*}{Binary TBO} 
& 5 Particles & 5.79\% & 0.13\% & 1.10\%\\
& & 10 Particles & 2.85\% & 0.11\% & 0.40\%\\
\cline{2-6}
& \multirow{2}{*}{Multi-branch TBO} 
& 5 Particles & 2.21\% & 0.01\% & 0.32\%\\
& & 10 Particles & 4.15\% & $\simeq$ 0\% & 0.57\%\\
\cline{2-6}
& \multirow{2}{*}{Adaptive TBO} 
& 5 Particles & 2.45\% & 0.05\% & 0.15\%\\
& & 10 Particles & 1.74\% & 0.08\% & 0.18\%\\
\cline{2-6}
& \multirow{2}{*}{Not using TBO} 
& 5 Particles & 0.80\% & 0.24\% & 0.75\%\\
& & 10 Particles & 0.45\% & 0.29\% & 0.16\%\\
\hline
\hline
\multirow{8}{*}{ \textbf{F4: Schwefel}} 
& \multirow{2}{*}{Binary TBO} 
& 5 Particles & 75.81\% & 36.86\% & 20.87\%\\
& & 10 Particles & 47.72\% & 20.48\% & 0.10\%\\
\cline{2-6}
& \multirow{2}{*}{Multi-branch TBO} 
& 5 Particles & 73.06\% & 32.76\% & 0.11\%\\
& & 10 Particles & 51.02\% & 12.29\% & 0.14\%\\
\cline{2-6}
& \multirow{2}{*}{Adaptive TBO} 
& 5 Particles & 63.06\% & 16.38\% & 0.10\%\\
& & 10 Particles & 25.51\% & 4.09\% & 0.09\%\\
\cline{2-6}
& \multirow{2}{*}{Not using TBO} 
& 5 Particles & 5.28\% & 95.01\% & 53.30\%\\
& & 10 Particles & 0.68\% & 76.82\% & 26.80\%\\
\hline
\hline
\end{tabular}%
}
\end{table}

\begin{table}[!t]
\renewcommand{\arraystretch}{1.3}  
\caption{Average errors of using TBO algorithm vs. not using TBO algorithm in three dimensional experiments}
\label{table_test_3D}
\centering
\scalebox{0.79}{    
\begin{tabular}{l | l | c c}
\hline
\hline
\multicolumn{2}{c|}{} & 5 Particles & 10 Particles\\
\hline
\multirow{2}{*}{ \textbf{F1: Sphere}} 
& Multi-branch TBO + GA & 0.58\% & 0.58\% \\
\cline{2-4}
& Just GA & 0.99\% & 0.71\% \\
\hline
\hline
\multirow{2}{*}{ \textbf{F2: Griewank}} 
& Multi-branch TBO + GA & 7.62\% & 0.28\% \\
\cline{2-4}
& Just GA & 13.04\% & 5.50\% \\
\hline
\hline
\multirow{2}{*}{ \textbf{F3: Schaffer}} 
& Multi-branch TBO + GA & $\simeq$ 0\% & 0.40\% \\
\cline{2-4}
& Just GA & 0.88\% & 2.66\% \\
\hline
\hline
\multirow{2}{*}{ \textbf{F4: Schwefel}} 
& Multi-branch TBO + GA & 0.37\% & 0.74\% \\
\cline{2-4}
& Just GA & 3.65\% & 1.63\% \\
\hline
\hline
\hline
\multirow{2}{*}{ \textbf{Total Average Error}} 
& Multi-branch TBO + GA & \textbf{2.14\%} & \textbf{0.50\%} \\
\cline{2-4}
& Just GA & 4.64\% & 2.62\% \\
\hline
\hline
\end{tabular}%
}
\end{table}

\begin{corollary}
The multi-branch TBO algorithm converges $d$ times faster than the binary TBO.
\end{corollary}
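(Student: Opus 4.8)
The plan is to read the corollary off directly from Theorems~\ref{theorem_depth_binary} and~\ref{theorem_depth_multibranch}, which already give closed-form expressions for the tree depth each variant needs in order to guarantee success probability at least $p = 1-\varepsilon$ for reaching the same $d$-dimensional $(2\varepsilon)$-cube around the optimum. First I would pin down what ``converges $d$ times faster'' means here: to achieve the same confidence $1-\varepsilon$ and the same target neighborhood, the binary TBO must grow its tree $d$ times deeper than the multi-branch TBO; combining this with the $\mathcal{O}(k)$ time-complexity result proved earlier (sub-algorithm treated as a unit-cost oracle), a $d$-fold deeper tree means $d$ times as many TBO iterations, hence $d$ times more running time.

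Second, the algebra is a single line: writing $k_{\mathrm{bin}} = -d\,(1+\log_2\varepsilon)$ from Theorem~\ref{theorem_depth_binary} and $k_{\mathrm{mb}} = -(1+\log_2\varepsilon)$ from Theorem~\ref{theorem_depth_multibranch}, one sees $k_{\mathrm{bin}} = d\cdot k_{\mathrm{mb}}$, so $k_{\mathrm{bin}}/k_{\mathrm{mb}} = d$. Since, by the time-complexity theorem, the running time of each variant is proportional to its tree depth $k$, the binary algorithm runs a factor of $d$ longer than the multi-branch one, which is the assertion.

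Third, I would insert a one-sentence remark handling the natural objection: a single iteration of multi-branch TBO invokes the sub-algorithm on $2^d$ regions rather than $2$, so if one charges for per-iteration work the comparison changes and the ``speed-up'' should be understood as being measured in tree depth (equivalently, number of global TBO iterations), not in total sub-algorithm calls. Under the oracle assumption of the time-complexity theorem this distinction collapses and the corollary holds verbatim.

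The main obstacle is therefore not computational but definitional: the statement only becomes meaningful once ``faster'' is tied explicitly to tree depth / number of iterations under the unit-cost-oracle model, so I would make the proof spell this out before performing the trivial division $k_{\mathrm{bin}} = d\cdot k_{\mathrm{mb}}$, ensuring the two depths are compared at the same $\varepsilon$, the same $d$, and the same notion of ``reaching the optima.''
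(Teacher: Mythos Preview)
Your proposal is correct and follows essentially the same approach as the paper: the paper's proof simply cites Theorems~\ref{theorem_depth_binary} and~\ref{theorem_depth_multibranch}, notes that the binary depth is $d$ times the multi-branch depth, and equates depth with number of iterations. Your version is more careful (making the $k_{\mathrm{bin}} = d\cdot k_{\mathrm{mb}}$ division explicit and flagging the $2^d$-regions caveat), but the underlying argument is identical.
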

\begin{proof}
According to Theorem \ref{theorem_depth_binary} and Theorem \ref{theorem_depth_multibranch}, it can be seen that the binary TBO algorithm needs a depth, $d$ times larger than the depth required by multi-branch TBO for convergence. This means the number splits and thus the number of iterations are $d$ times larger in binary TBO compared to multi-branch TBO. 
\end{proof}

Regarding the adaptive TBO, as any different adaptation can be chosen and the number of splits in every iteration might be any value determined by the rules of programmer, we do not analyze its required tree depth. However, the analysis for adaptive TBO is similar to the approaches of Theorem \ref{theorem_depth_binary} and Theorem \ref{theorem_depth_multibranch}.

\begin{figure*}[!t]
\centering
\begin{subfigure}[b]{0.3\textwidth}
\centering
\includegraphics[width=2.2in]{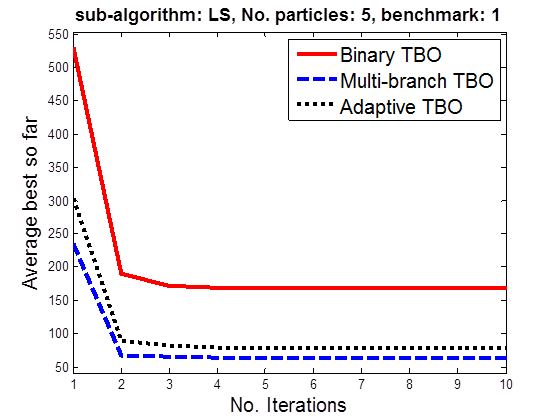} 
\caption{}
\label{fig:subim1}
\end{subfigure}
\begin{subfigure}[b]{0.3\textwidth}
\centering
\includegraphics[width=2.2in]{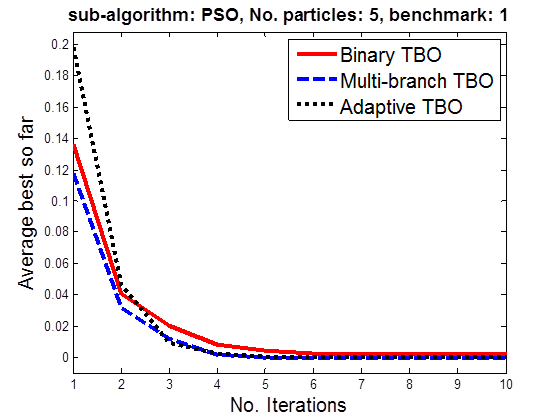} 
\caption{}
\label{fig:subim1}
\end{subfigure}
\begin{subfigure}[b]{0.3\textwidth}
\centering
\includegraphics[width=2.2in]{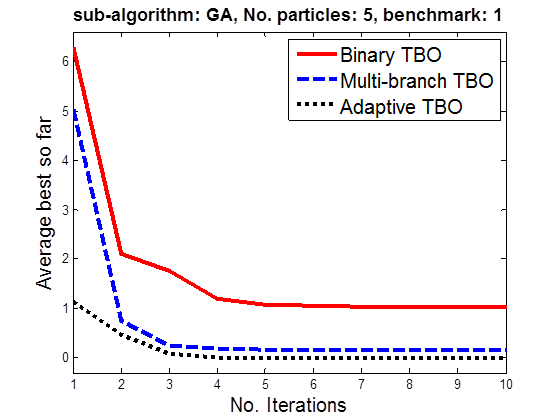} 
\caption{}
\label{fig:subim1}
\end{subfigure}
\begin{subfigure}[b]{0.3\textwidth}
\centering
\includegraphics[width=2.2in]{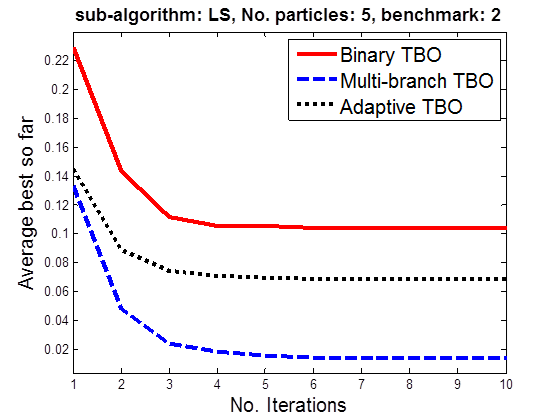} 
\caption{}
\label{fig:subim1}
\end{subfigure}
\begin{subfigure}[b]{0.3\textwidth}
\centering
\includegraphics[width=2.2in]{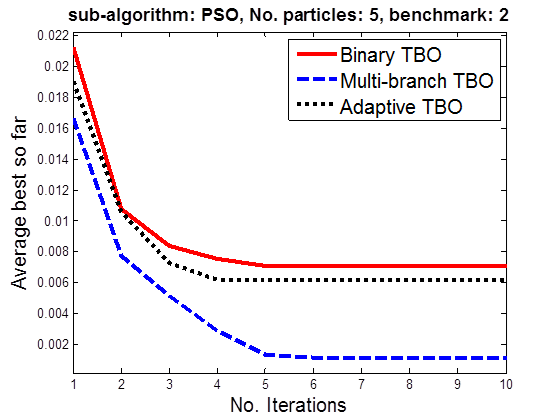} 
\caption{}
\label{fig:subim1}
\end{subfigure}
\begin{subfigure}[b]{0.3\textwidth}
\centering
\includegraphics[width=2.2in]{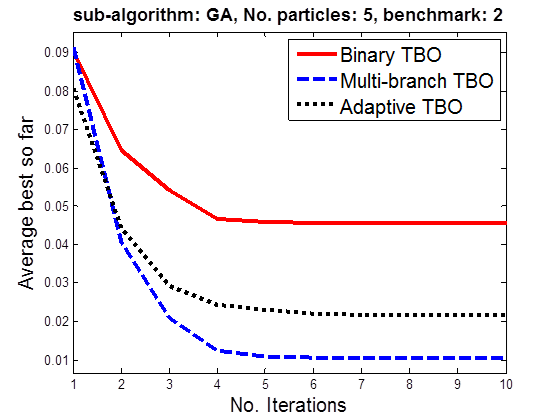} 
\caption{}
\label{fig:subim1}
\end{subfigure}
\begin{subfigure}[b]{0.3\textwidth}
\centering
\includegraphics[width=2.2in]{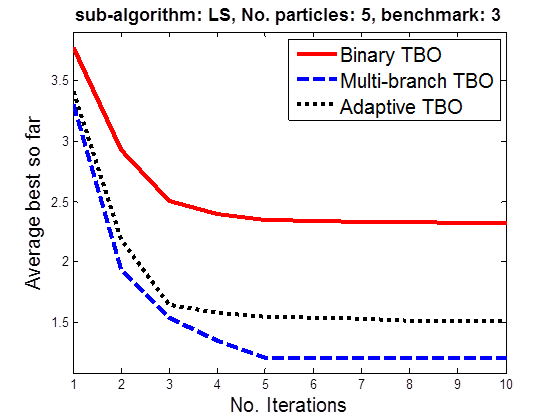} 
\caption{}
\label{fig:subim1}
\end{subfigure}
\begin{subfigure}[b]{0.3\textwidth}
\centering
\includegraphics[width=2.2in]{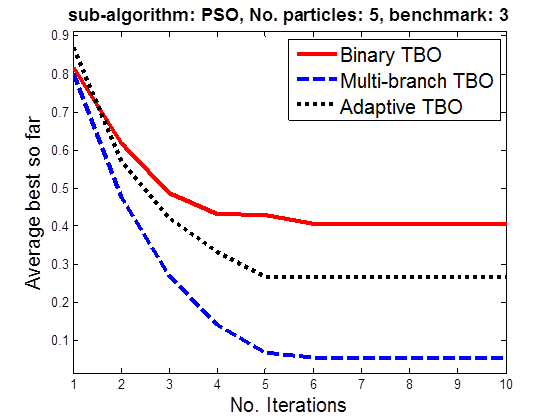} 
\caption{}
\label{fig:subim1}
\end{subfigure}
\begin{subfigure}[b]{0.3\textwidth}
\centering
\includegraphics[width=2.2in]{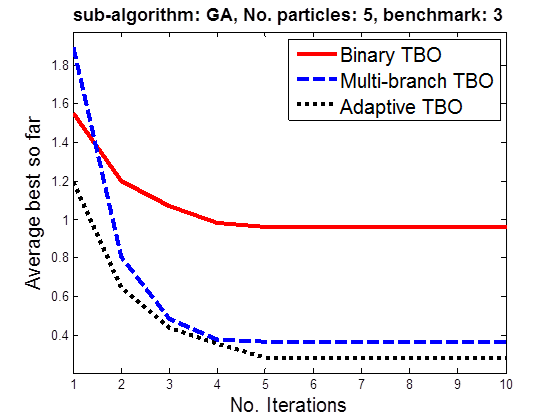} 
\caption{}
\label{fig:subim1}
\end{subfigure}
\begin{subfigure}[b]{0.3\textwidth}
\centering
\includegraphics[width=2.2in]{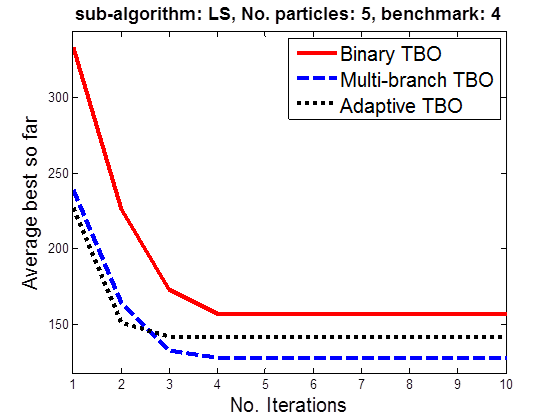} 
\caption{}
\label{fig:subim1}
\end{subfigure}
\begin{subfigure}[b]{0.3\textwidth}
\centering
\includegraphics[width=2.2in]{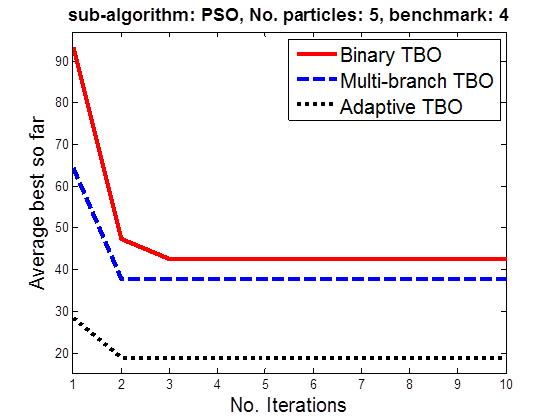} 
\caption{}
\label{fig:subim1}
\end{subfigure}
\begin{subfigure}[b]{0.3\textwidth}
\centering
\includegraphics[width=2.2in]{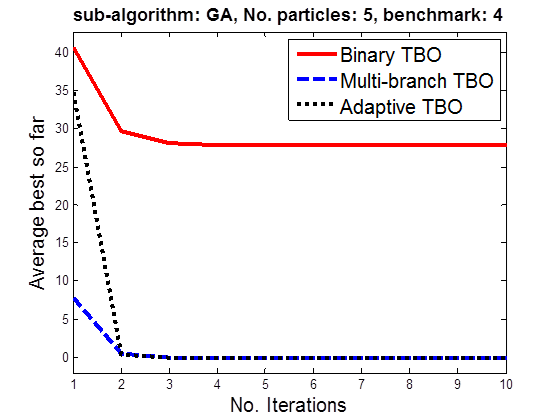} 
\caption{}
\label{fig:subim1}
\end{subfigure}
\caption{Comparison of binary, multi-branch, and adaptive TBO in different benchmarks and sub-algorithms with 5 particles.}
\label{test_5_particles}
\end{figure*}

\section{Experimental Results}\label{section_experiments}
\subsection{Two Dimensional Experiments}
To test the algorithm, the three proposed types of TBO algorithm, i.e., binary, multi-branch and adaptive TBO, are examined. Three sub-algorithms are used for each of them, which are Local Search (LS), Particle Swarm Optimization (PSO) \cite{kennedy1995j} and Genetic Algorithm (GA) \cite{john1992holland,holland1989genetic}. Local Search is locating particles randomly in the landscape to search locally. 
For PSO, global PSO algorithm is used which takes the best answer found so far as the global best and the best
answer found by each particle, as the local best of that particle. In GA, the selection of parents for generating the next generation is chosen to be proportional selection.

The experiments are done for both populations of 5 and 10 particles. The adaptation in the adaptive TBO is the same as the test mentioned in Section \ref{section_adaptive_TBO}. The depth of tree in all experiments is set to be 10.
The benchmarks used for test are listed in Table \ref{benchmarks_table} and shown in Fig. \ref{benchmarks_figure}.
The results of the experiments are shown in figures \ref{test_5_particles} and \ref{test_10_particles} for 5 and 10 particles, respectively. Each of these plots is the average of 25 times of experiments. The average errors of experiments are listed in Table \ref{table_test} and the normalized average errors of this table are depicted in Fig. \ref{bar}.
As is obvious in this figure and was expected, multi-branch and adaptive TBO algorithms perform better than binary TBO in all experiments. Multi-branch TBO performs better than adaptive TBO in some of benchmarks and in some cases, adaptive TBO is better. In addition, Table \ref{table_test} clarifies that more particles causes better search as was expected.

\begin{figure*}[!t]
\centering
\begin{subfigure}[b]{0.3\textwidth}
\centering
\includegraphics[width=2.2in]{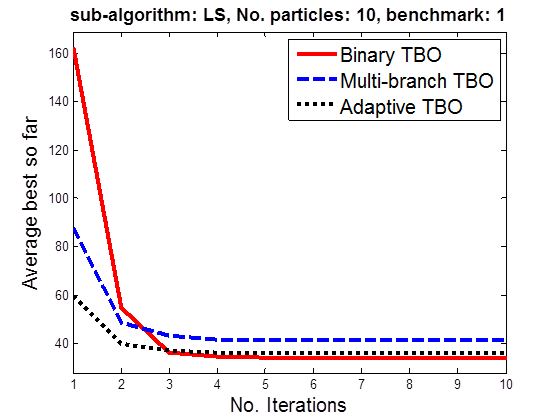} 
\caption{}
\label{fig:subim1}
\end{subfigure}
\begin{subfigure}[b]{0.3\textwidth}
\centering
\includegraphics[width=2.2in]{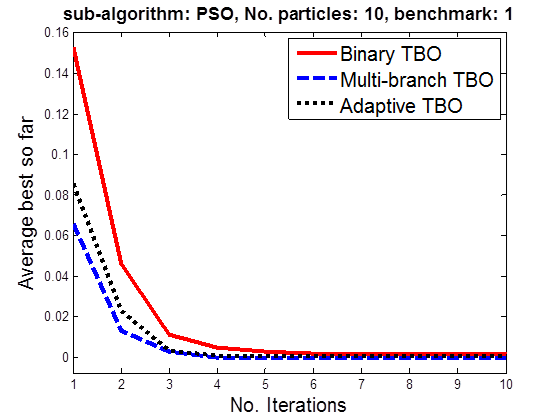} 
\caption{}
\label{fig:subim1}
\end{subfigure}
\begin{subfigure}[b]{0.3\textwidth}
\centering
\includegraphics[width=2.2in]{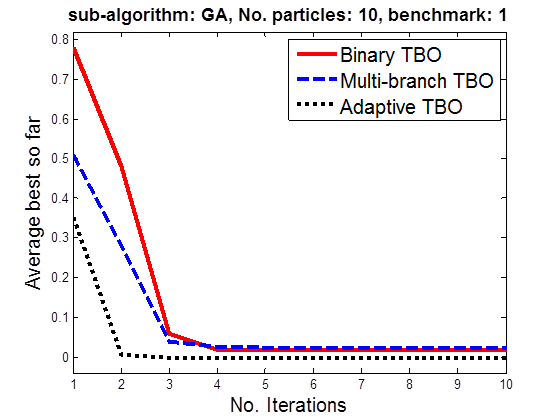} 
\caption{}
\label{fig:subim1}
\end{subfigure}
\begin{subfigure}[b]{0.3\textwidth}
\centering
\includegraphics[width=2.2in]{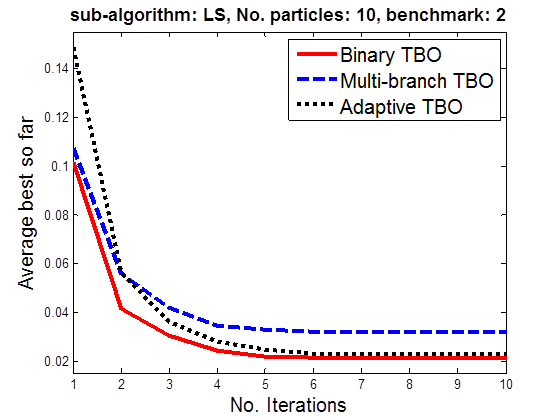} 
\caption{}
\label{fig:subim1}
\end{subfigure}
\begin{subfigure}[b]{0.3\textwidth}
\centering
\includegraphics[width=2.2in]{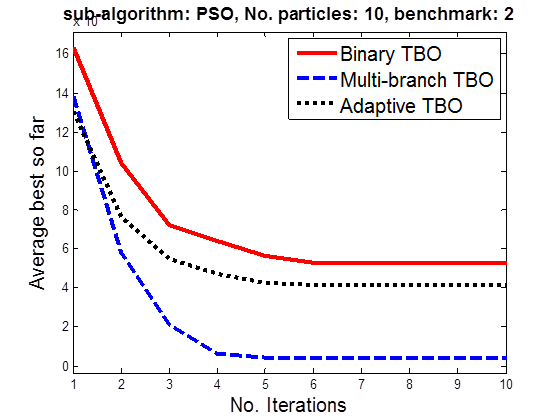} 
\caption{}
\label{fig:subim1}
\end{subfigure}
\begin{subfigure}[b]{0.3\textwidth}
\centering
\includegraphics[width=2.2in]{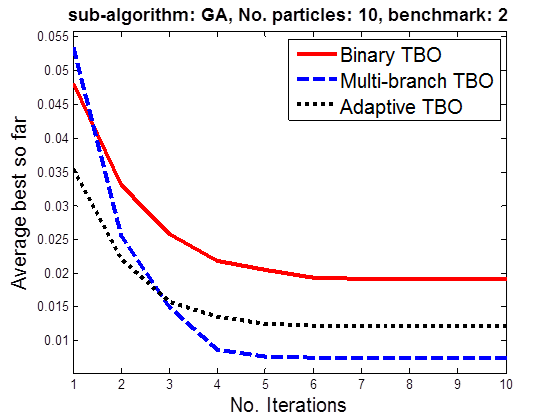} 
\caption{}
\label{fig:subim1}
\end{subfigure}
\begin{subfigure}[b]{0.3\textwidth}
\centering
\includegraphics[width=2.2in]{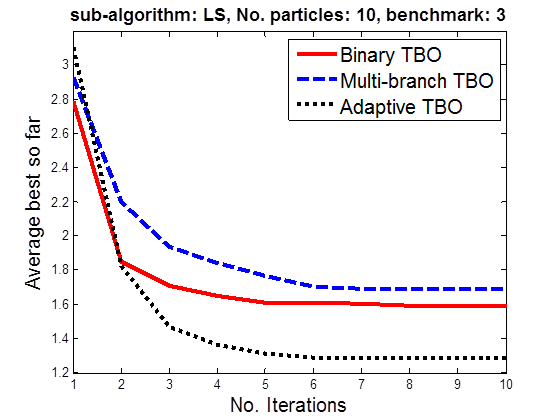} 
\caption{}
\label{fig:subim1}
\end{subfigure}
\begin{subfigure}[b]{0.3\textwidth}
\centering
\includegraphics[width=2.2in]{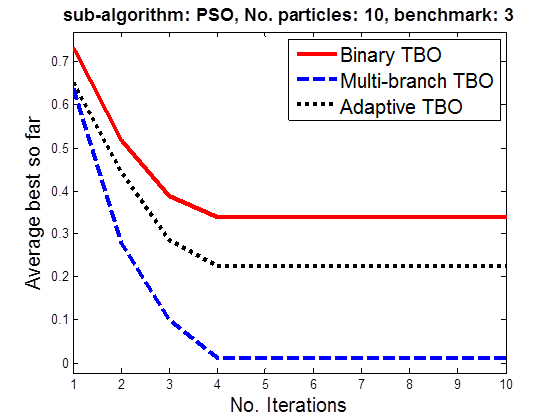} 
\caption{}
\label{fig:subim1}
\end{subfigure}
\begin{subfigure}[b]{0.3\textwidth}
\centering
\includegraphics[width=2.2in]{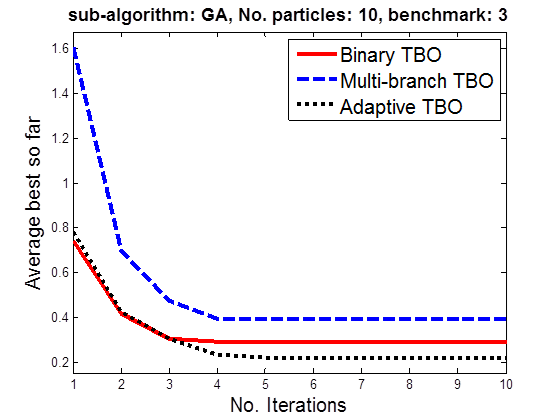} 
\caption{}
\label{fig:subim1}
\end{subfigure}
\begin{subfigure}[b]{0.3\textwidth}
\centering
\includegraphics[width=2.2in]{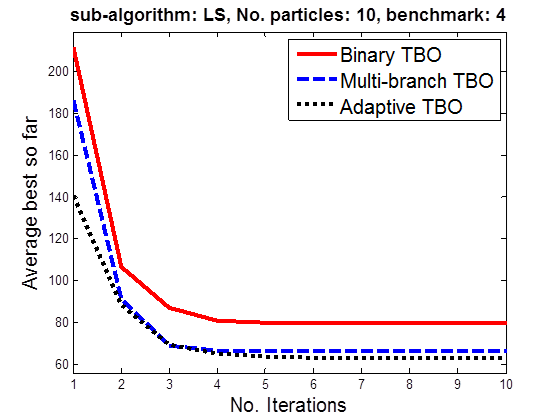} 
\caption{}
\label{fig:subim1}
\end{subfigure}
\begin{subfigure}[b]{0.3\textwidth}
\centering
\includegraphics[width=2.2in]{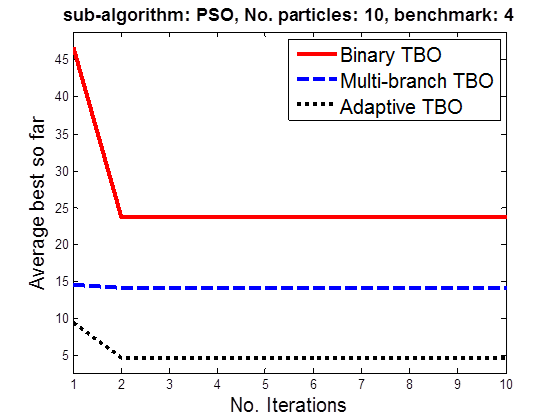} 
\caption{}
\label{fig:subim1}
\end{subfigure}
\begin{subfigure}[b]{0.3\textwidth}
\centering
\includegraphics[width=2.2in]{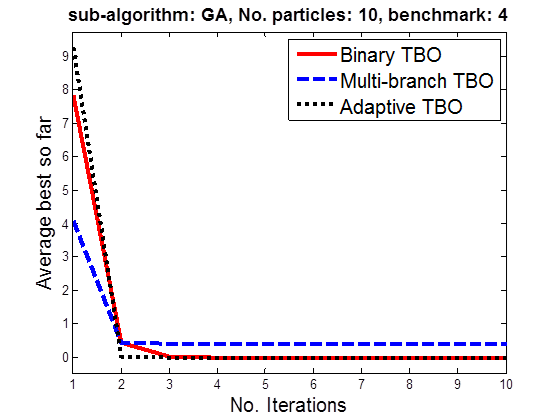} 
\caption{}
\label{fig:subim1}
\end{subfigure}
\caption{Comparison of binary, multi-branch, and adaptive TBO in different benchmarks and sub-algorithms with 10 particles.}
\label{test_10_particles}
\end{figure*}

It was not possible to plot both results of using TBO algorithms and not using TBO (using just sub-algorithms) in one figure because the essence and the number of iterations in the meta-algorithm and the sub-algorithm are totally different. That is why figures \ref{test_5_particles} and \ref{test_10_particles} depict only results of running TBO algorithm.

To show better performance of TBO algorithm in comparison to using the optimization algorithms lonely, the results of experiment are compared with alone local search, PSO and GA in Table \ref{table_test} and Fig. \ref{bar}. Benchmarks 2 and 3 are searched better using TBO algorithm. Although TBO algorithm seems to fail to lonely search algorithms in benchmarks 1 and 4 but Table \ref{table_test_3D} shows absolute better performance of it in all benchmarks for higher dimensional cases of search.

\subsection{Three Dimensional Experiments}

It is claimed that TBO algorithm shows its huge performance better in high dimensions because it removes the bad parts of landscape in order to guide searching to the region containing global optima. An example of high dimensional search is Integrated Circuit (IC) designing in Very Large Scale Integration (VLSI) Computer-Aided Design (CAD) tools which use search algorithms to find an optimum solution for power, surface, delay, and capacitance of chip's die. 

In order to prove the claim of better performance in high dimensional landscapes, an experiment is performed by multi-branch TBO and GA as its sub-algorithm. Table \ref{table_test_3D} compares using multi-branch TBO and GA with using just GA in the four mentioned benchmarks. Note that the benchmarks are set to have three dimensions that cannot be depicted because of their fourth dimension of value. The limit of all benchmarks is set to be $x_i \in [-100,100]$ with the step of one.
As can be seen in Table \ref{table_test_3D} and Fig. \ref{bar_for_3D}, TBO algorithm outperforms using just sub-algorithm for searching in all 3D experiments. 

If this wonderful result is obtained for three dimensional search, this algorithm will perform greatly much better in very high dimensional search since as the dimension gets higher, the complexity of search grows exponentially and the TBO algorithm (especially multi-branch TBO) can help searching become more accurate and also faster. As a conclusion, this algorithm can be utilized for faster and more accurate search especially in high dimensional search such as VLSI IC designing.

Notice that although multi-branch TBO is used for testing the high dimensional search, binary and adaptive TBO can also be used in such problems. For example, binary TBO splits the high dimensional landscape in every dimension, one by one, for each iteration.

\subsection{Some discussions}

As is obvious for high dimensional search, reducing the landscape of search, improves the performance and speed of search exponentially. That is why TBO algorithm outperforms using just sub-algorithms in high dimensional searches.

In TBO algorithm, the landscape is divided into multiple partitions in every iteration. Partitioning the landscape allows search to perform more accurately and scrupulously in every part. In other words, the search sub-algorithm focuses on a smaller part to use its all ability for searching a part and not the entire landscape. 
On the other hand, when not using TBO algorithm, the search algorithm faces a big and perhaps high dimensional search space just by itself! It tries its best to find the optima however the landscape is too high and big for it to search accurately enough. In order to make the search algorithm more powerful for searching the whole landscape, the numbers of particles should be increased causing the algorithm to search very slowly because of the more iterations of particles.

\begin{figure}[!t]
\centering
\includegraphics[width=3in]{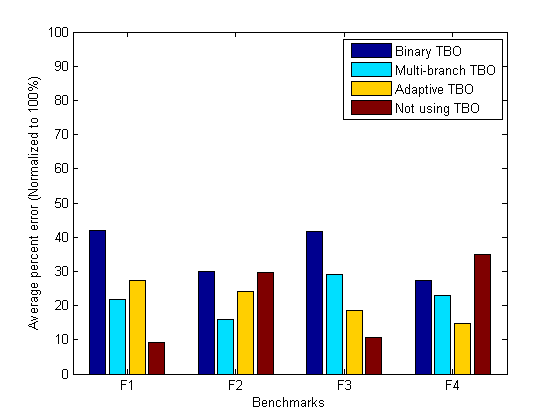}
\caption{Average errors (in percents) of Table \ref{table_test}.}
\label{bar}
\end{figure}

\begin{figure}[!t]
\centering
\includegraphics[width=3in]{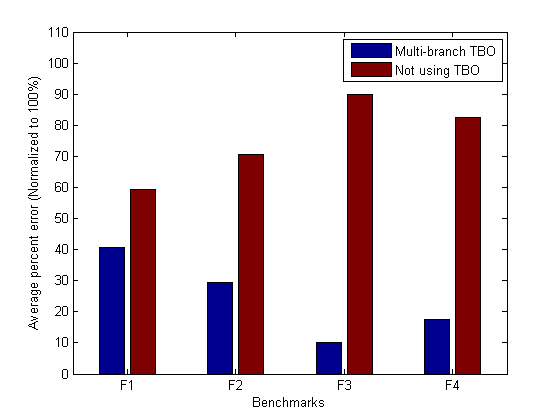}
\caption{Average errors (in percents) of Table \ref{table_test_3D}.}
\label{bar_for_3D}
\end{figure} 

\section{Conclusion and Future Direction}\label{section_conclusion}
This paper proposed a meta-algorithm, called Tree-Based Optimization (TBO), which uses other metaheuristic optimization algorithms as its sub-algorithm to enhance the performance of the search operation. 
TBO algorithm tries to remove the parts of search space that have low fitness in order to limit the search space to the parts really worthy to search. This algorithm minimizes the landscape level by level until it converges to the global optima. Three different types of TBO algorithm were proposed and explained in detail and were tested for different benchmarks. Results showed wonderfully good performance of TBO algorithm especially in high dimensional landscapes.
This algorithm performs much better in high dimensional landscapes since it simplifies the search exponentially and improves the search both in accuracy and speed.
TBO algorithm might be extended to classic mathematical optimization. It can use methods such as Gradient Descent as its sub-algorithm in order to search for the optima in the divided regions. Using classical methods in TBO algorithm might be a very powerful optimization method, which will be deferred as a future work.


%

\appendices




\ifCLASSOPTIONcaptionsoff
  \newpage
\fi



%

\bibliographystyle{IEEEtran}
\bibliography{References}

%








\end{document}